\newtheorem{definition}{Definition}
\newtheorem{lemma}{Lemma}
\newtheorem{theorem}{Theorem}
\newcommand{\norm}[1]{\left\Vert#1\right\Vert} 
\newcommand{\utimes}{ {\raisebox{-0.6ex}{ \kern-1.0ex\raisebox{0.6ex}{\small$\mathsf{v}$}}} } 
\newcommand{\bbm}{\begin{bmatrix}}
\newcommand{\ebm}{\end{bmatrix}}
\newcommand{\bma}[1]{\left[\begin{array}{#1}}
\newcommand{\ema}{\end{array}\right]}
\newcommand{\trans}{{\ensuremath{\mathsf{T}}}} 
\newcommand{\trace}{ {\ensuremath{\mathrm{tr}}} } 
\newcommand{\bnu}{\mbs{\nu}}
\newcommand{\bone}{\mbf{1}}
\newcommand{\onehalf}{\mbox{$\textstyle{\frac{1}{2}}$}}
\newcommand{\beq}{\begin{equation}}
\newcommand{\eeq}{\end{equation}}
\newcommand{\bdis}{\begin{displaymath}}
\newcommand{\edis}{\end{displaymath}}
\newcommand{\beqarray}{\begin{eqnarray}}
\newcommand{\eeqarray}{\end{eqnarray}}
\newcommand{\beqarraynn}{\begin{eqnarray*}}
\newcommand{\eeqarraynn}{\end{eqnarray*}}
\DeclareMathAlphabet{\mbf}{OT1}{ptm}{b}{n}
\newcommand{\mbs}[1]{{\boldsymbol{#1}}}
\newcommand{\mbc}[1]{ \boldsymbol{\mathcal{#1}} }
\newcommand{\mbstilde}[1]{{\tilde{\boldsymbol{#1}}}}
\newcommand{\mbfhat}[1]{{\hat{\mbf{#1}}}}
\newcommand{\mbftilde}[1]{{\tilde{\mbf{#1}}}}
\newcommand{\rev}[1]{{\color{black}{#1}}}
\newtheorem{corollary}{Corollary}
\title{\LARGE \bf
Adaptive Passivity-Based Pose Tracking Control of Cable-Driven Parallel Robots for Multiple Attitude Parameterizations
}
\author{Sze Kwan~Cheah,~\IEEEmembership{Student~Member,~IEEE,}
        Alex~Hayes,~\IEEEmembership{Student~Member,~IEEE,} \\
        and~Ryan~J.~Caverly,~\IEEEmembership{Member,~IEEE}
\thanks{SK. Cheah, A. Hayes, and R. J. Caverly are with the Department
of Aerospace Engineering and Mechanics, University of Minnesota, Minneapolis,
MN, 55455 USA e-mails: \texttt{cheah013@umn.edu}, \texttt{hayes455@umn.edu}, \texttt{rcaverly@umn.edu}.}}
\begin{document}
\maketitle
\thispagestyle{empty}
\pagestyle{empty}

\begin{abstract}

This paper presents a pose tracking controller for a six degree-of-freedom over-constrained cable-driven robot (CDPR). 
The proposed control method uses an adaptive feedforward-based controller to establish a passive input-output mapping for the CDPR that is used alongside a \rev{linear time-invariant strictly positive real} feedback controller to guarantee robust closed-loop input-output stability and asymptotic pose trajectory tracking via the passivity theorem. A novelty of the proposed controller is its formulation for use with a range of payload attitude parameterizations, including any unconstrained attitude parameterization, the quaternion, or the direction cosine matrix (DCM). 
The performance and robustness of the proposed controller is demonstrated through numerical simulations of a CDPR with rigid and flexible cables. 
\rev{The results demonstrate the importance of carefully defining the CDPR's pose error, which is performed in multiplicative fashion when using the quaternion and DCM, and in a specific additive fashion when using unconstrained attitude parameters (e.g., an Euler-angle sequence).}

\end{abstract}


\section{INTRODUCTION}

\label{sec:Intro}

Over-constrained cable-driven parallel robots (CDPRs) are a class of parallel robots that make use of a redundant set of tensile cable forces to actuate an end-effector or payload. 
CDPRs typically feature large workspaces and are capable of relatively high payload accelerations due to their low inertia compared to traditional parallel and serial robotic manipulators. Accurate and robust pose (position and attitude/orientation) control of the CDPR's payload or end-effector is challenging, as over-constrained CDPRs are redundantly actuated, which requires a force distribution algorithm (see~\cite{Pott2018} for a summary of commonly used methods), and they can have highly uncertain dynamics (e.g., payload with uncertain inertia or flexible/sagging cables). \rev{Uncertainty in the CDPR's dynamics can be accounted for with adaptive control techniques, which are often coupled to a specific form of a feedback controller (e.g., a constant-gain proportional-derivative controller) and a specific representation of the payload's attitude (e.g., an Euler-angle sequence)~\cite{Lamaury2013,babaghasabha2016adaptive,ji2020adaptive,shang2020adaptive,harandi2021adaptive}.}




Passivity-based control is capable of providing guarantees of robust closed-loop input-output stability for large ranges of system uncertainty and has been widely implemented on serial robotic manipulators \rev{for trajectory tracking}~\cite{Ortega1989,Brogliato2020}. Passivity-based control has recently been extended to \rev{the robust control of} parallel robots~\cite{shibata2008null,abdellatif2008passivity,Hayes2020}, and in particular, CDPRs~\cite{Zarebidoki2011,Caverly2015_TCST,Caverly2018,Godbole2019,khalilpour2021tip,Cheah2021}. 
For example, a robust adaptive passivity-based control method for \rev{single degree-of-freedom (DOF)} CDPRs capable of tracking desired payload trajectories in the presence of model uncertainty and flexible cables was presented in~\cite{Godbole2019}. Early work on the passivity-based control of CDPRs focused on suspended CDPRs with the same number of cables as payload DOFs~\cite{Zarebidoki2011} or relied on having twice as many cables as payload DOFs in the overconstrained case~\cite{Caverly2018,Godbole2019,Caverly2015_TCST}. The work of~\cite{Hayes2020,khalilpour2021tip,Cheah2021} demonstrated that passivity-based task-space translational~\cite{Hayes2020,khalilpour2021tip} and pose~\cite{Cheah2021} control of CDPRs can be decoupled from the choice of control allocation method, which greatly expanded its applicability to realistic CDPR configurations that typically feature one or two more cables than payload DOFs. 

Virtually all CDPR pose regulation and tracking controllers in the literature make use of Euler angles to compute the attitude portion of the control law (see examples in~\cite{Lamaury2013,babaghasabha2016adaptive,KorayemFeedbackLinearizationCDPR,ji2020adaptive,shang2020adaptive,harandi2021adaptive,Bruckmann2008,Zarebidoki2011,chellal2017model,Schenk2018,Begey2019,santos2020redundancy}), with the exception of the direction cosine matrix (DCM)-based controller in~\cite{Cheah2021} and the rotation vector-based controller in~\cite{Zake2019}. In other words, the attitude of the CDPR payload at a given instance in time is computed in terms of an Euler-angle sequence and subtracted from a \rev{set of desired Euler angles} to form an error signal that is regulated to zero. Although these Euler-angle-based controllers clearly work in practice, it is unnecessary to restrict CDPR pose control to this one choice of attitude parameterization, especially when the rotation matrix or DCM describing the attitude of the CDPR payload is typically available though the forward kinematics needed to operate the CDPR. In addition, advances in nonlinear pose estimation has led to methods that directly estimate the rotation matrix/DCM~\cite{LeNguyenVinh2021CPRP} or quaternion~\cite{zake2021moving} associated with a CDPR payload, making these quantities readily available for control. 

This paper presents an adaptive passivity-based CDPR pose tracking controller that uses the passivity theorem to guarantee closed-loop input-output stability and asymptotic tracking of a desired payload pose trajectory, where various attitude parameterizations of the payload attiitude can be used.  The proposed controller takes inspiration from multiple sources, including passivity-based adaptive controllers designed for CDPRs~\cite{harandi2021adaptive}, redundantly-actuated flexible manipulators\cite{Damaren1996}, and spacecraft~\cite{egeland1994passivity}. The novel contributions of the proposed controller compared to other adaptive CDPR controllers in the literature, including~\cite{Lamaury2013,babaghasabha2016adaptive,ji2020adaptive,shang2020adaptive,harandi2021adaptive}, is 1) its ability to make use of any unconstrained attitude parameterization, the quaternion, or the DCM when computing the pose tracking error and 2) its ease of use with any input-strictly passive (ISP) or strictly positive real (SPR) feedback controller. The first contribution has the potential to lead to a more homogeneous CDPR operation framework, where the same attitude parameterization can be used both for kinematics and motion control. At a minimum, the proposed control method provides the CDPR operator with a choice as to which attitude parameterization they desire to use for feedback, which, to the best of the knowledge of the authors, is a limitation in the CDPR literature, where Euler-angle sequences are almost exclusively used for control (exceptions include a DCM-based pose-regulation controller was used in~\cite{Cheah2021} and a rotation-vector-based controller was implemented in~\cite{Zake2019}). The second contribution related to the use of an ISP or SPR controller has practical benefits, as the design of the feedback controller can be decoupled from the closed-loop stability analysis and practical control designs, such as a low-pass control gain can be implemented. 

The form of the proposed passivity-based adaptive controller stems from~\cite{slotine1987adaptive,Ortega1989} and makes use of advances in~\cite{Damaren1996,egeland1994passivity}, where attitude parameterizations were incorporated within passivity-based control. 
The novelty of the proposed controller compared to the theory developed in~\cite{Damaren1996} for flexible manipulators, includes extending its use to the quaternion or DCM, as well as its application and validation on a CDPR. 
The quaternion-based spacecraft attitude controller in~\cite{egeland1994passivity} is extended to CDPR pose tracking to yield the proposed quaternion-based method. 
The work in this paper is also an extension of the preliminary study on passivity-based pose regulation of a CDPR in~\cite{Cheah2021}, which assumed knowledge of the CDPR dynamics, did not provide any mathematical guarantees of pose tracking error convergence, and was limited to the use of the DCM to represent the attitude of the CDPR's payload. The control method proposed in this paper removes these restrictions and assumptions.

The remainder of this paper proceeds as follows. Important preliminaries, including notation, theorems, and a description of the CDPR kinematics and dynamics are presented in Section~\ref{sec:Prelims}. Section~\ref{sec:ControlFormulation} presents the proposed adaptive passivity-based control formulation using unconstrained attitude parameterizations, the quaternion, and the DCM. Numerical simulation results are included in Section~\ref{sec:Examples}, followed by concluding remarks in Section~\ref{sec:Conclusion}.

\section{Preliminaries}
\label{sec:Prelims}

Notation and theorems used throughout the paper are presented in this section, followed by an overview of the CDPR kinematics and dynamics considered in this work.

\subsection{Notation, Definitions and Theorems}
For this paper, the identity matrix and a matrix of zeros are respectively written as $\mbf{1}$ and $\mbf{0}$. 
Matrices are represented in bold (e.g., $\mbf{A} \in \mathbb{R}^{n \times m}$). 
Positive definite matrices are represented by $\mbf{A}=\mbf{A}^\trans>0$. 
The cross operator, $(\cdot)^\times: \mathbb{R}^3 \to \mathfrak{so}(3) $, is defined as
\bdis
\label{eq:crossproduct}
\mbf{v}^\times = \bbm 
    0 & -v_3 & v_2 \\
    v_3 & 0 & -v_1 \\
    -v_2 & v_1 & 0
\ebm,
\edis
where $\mbf{v}^\trans = \bbm v_1 & v_2 & v_3 \ebm$ and $\mathfrak{so}(3) = \{\mbf{S} \in \mathbb{R}^{3 \times 3} | \mbf{S} + \mbf{S}^\trans = \mbf{0}\}$. The reverse or uncross operator, $(\cdot)^\utimes: \mathbb{R}^{3 \times 3} \to \mathbb{R}^3$, is defined as $\mbf{A}^\utimes = \bbm a_1 & a_2 & a_3 \ebm^\trans$, where $\mbf{A} = -\mbf{A}^\trans = \left(\mbf{A}^\utimes \right)^\times$. 
The antisymmetric projection operator, $\mbc{P}(\cdot): \mathbb{R}^{3 \times 3} \to \mathfrak{so}(3)$, projects a matrix $\mbf{U} \in \mathbb{R}^{3 \times 3}$ to the set of antisymmetric matrices, where $\mbc{P} \left( \mbf{U} \right) = \frac{1}{2} \left(\mbf{U}-\mbf{U}^\trans \right)$.
For $\mbf{v} \in \mathbb{R}^3$ and $\mbf{U} \in \mathbb{R}^{3 \times 3}$, it follows that~\cite{Forbes2013}
\beq
\label{eq:ProjProperty}
\onehalf \trace \left(\mbf{v}^\times \mbf{U}\right) = - \mbf{v}^\trans \mbc{P}\left(\mbf{U}\right)^\utimes.
\eeq
Another useful cross operator identity is given by~\cite{WuTse-Huai2016Avof}
\beq
\label{eq:crossidentity}
\mbf{v}^\times \mbf{A} + \mbf{A}^\trans \mbf{v}^\times = ((\trace({\mbf{A}}) \bone - \mbf{A})\mbf{v})^\times,
\eeq
where $\mbf{v} \in \mathbb{R}^3$ and $\mbf{A} \in \mathbb{R}^{3 \times 3}$. 
The signal $\mbf{y}(t)$ satisfies $\mbf{y} \in \mathcal{L}_2$ if $\norm{\mbf{y}}_2^2 = \int_0^\infty \mbf{y}^\trans(t) \mbf{y}(t) \mathrm{d}t < \infty$. The signal $\mbf{y}(t)$ satisfies $\mbf{y} \in \mathcal{L}_{2e}$ if $\mbf{y}_T \in \mathcal{L}_2$ for all $T \in \mathbb{R}_{\geq 0}$, where
$\mbf{y}_T(t) = \mbf{y}(t)$ for $0 \leq t \leq T$ and $\mbf{y}_T(t) = \mbf{0}$ for $T < t$.

The attitude of reference frame $\mathcal{F}_p$ relative to reference frame $\mathcal{F}_a$ is described by the DCM $\mbf{C}_{pa}$, which is a member of the special orthogonal group $SO(3)$, where $SO(3) = \{ \mbf{C} \in \mathbb{R}^{3 \times 3} \,\, | \,\, \mbf{C}^\trans \mbf{C} = \mbf{1}, \,\, \det(\mbf{C}) = 1\}$. The DCM $\mbf{C}_{pa}$ is related to the rotation matrix, $\mbf{R}$ that rotates frame $\mathcal{F}_a$ to $\mathcal{F}_p$ by $\mbf{C}_{pa} = \mbf{R}^\trans$. Parameterizations of the DCM are represented in this paper as $\mbf{q}^{pa} \in \mathbb{R}^n$, examples of which include an Euler-angle sequence ($\mbf{q}^{pa} \in \mathbb{R}^{3}$), the quaternion ($\mbf{q}^{pa} \in \mathbb{R}^{4}$), or even the columns of the DCM ($\mbf{q}^{pa} \in \mathbb{R}^{9}$). 
Poisson's equation relates the angular velocity to the time derivative of the DCM as $\dot{\mbf{C}}_{pa} = -\mbs{\omega}^{pa^\times} \mbf{C}_{pa}$, where $\mbs{\omega}^{pa}$ is the angular velocity of $\mathcal{F}_p$ relative to $\mathcal{F}_a$ resolved in $\mathcal{F}_p$. The attitude parameterization rates are related to angular velocity by $\mbs{\omega}^{pa} = \mbf{S}(\mbf{q}^{pa}) \dot{\mbf{q}}^{pa}$, where $\mbf{S}(\mbf{q}^{pa})$ is a mapping matrix whose contents depends on the choice of attitude parameterization~\cite{MarkleyF.Landis2014Fosa,nguyenAttitude}. 

\begin{definition}[Passivity~\cite{Brogliato2020}]
The input-output mapping $\mbf{u}\mapsto \mbf{y}$ associated with the operator $\mbc{G}: \mathcal{L}_{2e} \rightarrow \mathcal{L}_{2e}$, where $\mbf{y} = \mbc{G}(\mbf{u})$, is ISP if for all $\mbf{u} \in \mathcal{L}_{2e}$ and $T \in \mathbb{R}_{\geq 0}$ there exist $\delta \in \mathbb{R}_{>0}$ and $\beta \in \mathbb{R}$ such that 
\beq
\label{eq:DefISP}
\int_0^T{\mbf{y}^\trans(t)\mbf{u}}(t)\mathrm{d}t \geq \delta \norm{\mbf{u}_T}_2^2 + \beta.
\eeq
If~\eqref{eq:DefISP} is satisfied with $\delta = 0$, then $\mbf{u} \mapsto \mbf{y}$ is passive.  The scalar $\beta$ is a constant related to initial conditions.
\end{definition}

\subsection{CDPR Kinematics and Dynamics}
\begin{figure}[t!]
\centering
\includegraphics[width=0.5\textwidth]{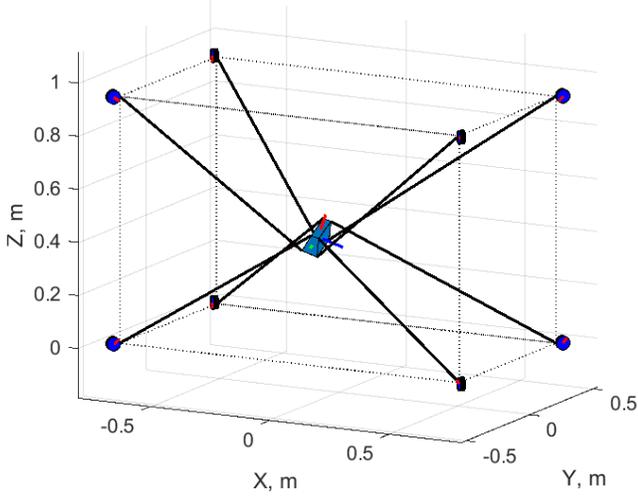}
\caption{A $6$~DOF CDPR with 8 flexible cables and a rigid-body payload.} \label{fig:CDPR_Anim}
\end{figure}
Consider an over-constrained CDPR with $m$ rigid cables actuated by winches and connected to a rigid-body payload, where $m > 6$, as shown in Fig.~\ref{fig:CDPR_Anim}. 
The CDPR's equations of motion in task space are given as~\cite{Khosravi2}
\beq
\label{eq:EoMTask}
\mbf{M}(\mbs{\rho}) \dot{\mbs{\nu}} + \mbf{D}(\mbs{\rho},{\mbs{\nu}}) \mbs{\nu} + \mbf{g}(\mbs{\rho}) = \mbs{\Pi}^\trans(\mbs{\rho})\mbs{\tau},
\eeq
where $\mbs{\rho}^\trans = [ \mbf{r}^\trans \,\, \mbf{q}^{pa^\trans} ]$ represents the payload's pose, $\mbf{r} \in \mathbb{R}^3$ is the position of the payload's center of mass relative to a point at the origin of an inertial frame $\mathcal{F}_a$ resolved in $\mathcal{F}_a$, and $\mbf{q}^{pa} \in \mathbb{R}^n$ is an attitude \rev{parameterization} of the payload-fixed reference frame $\mathcal{F}_p$ relative to $\mathcal{F}_a$. The augmented payload velocity is given by $\bnu^\trans = [ \dot{\mbf{r}}^\trans \,\, {\mbs{\omega}}^{pa^\trans} ]$, where $\mbs{\omega}^{pa} \in \mathbb{R}^3$ is the angular velocity of $\mathcal{F}_p$ relative to $\mathcal{F}_a$ resolved in $\mathcal{F}_p$. The torques applied by the winches are denoted as $\mbs{\tau}^\trans = \bbm \tau_1 & \cdots & \tau_m \ebm$. The remaining terms in~\eqref{eq:EoMTask} are the mass matrix $\mbf{M}(\mbs{\rho}) = \mbf{M}^\trans(\mbs{\rho}) > 0$, the nonlinear term $\mbf{D}(\mbs{\rho},\mbs{\nu})$, which contains centrifugal and Coriolis forces, and the gravitational term $\mbf{g}(\mbs{\rho})$. Furthermore, it is known that $\dot{\mbf{M}}(\mbs{\rho}) - 2\mbf{D}(\mbs{\rho},\mbs{\nu})$ is skew-symmetric~\cite{Khosravi2}. The winch torques are distributed \rev{through the wrench matrix $\mbs{\Pi}(\mbs{\rho}) \in \mathbb{R}^{m \times 6}$, which is uniquely defined through inverse velocity kinematics 
and is full rank when the payload remains within its wrench-feasible workspace.}

\section{Control Formulation and Passivity \& Stability Analyses}
\label{sec:ControlFormulation}

The pose tracking controller is presented in this section, \rev{where the objective is to ensure $\mbf{r} \to \mbf{r}_d$ and $\mbf{q}^{pa} \to \mbf{q}^{da}$ as $t \to \infty$, where $\mbf{r}_d$ and $\mbf{q}^{da}$ describe the desired position and attitude trajectories of the CDPR payload, respectively.}
%
%
%
%
\rev{As with previous preliminary work on the passivity-based control of CDPRs~\cite{Cheah2021}, the controller is formulated in terms of 
\bdis
\mbf{f} = \mbs{\Pi}^\trans(\mbs{\rho})\mbs{\tau},
\edis
where $\mbf{f} \in \mathbb{R}^6$ is the control wrench to be applied to the payload (i.e., its first three elements are a force resolved in $\mathcal{F}_a$ and its last three elements are a torque resolved in $\mathcal{F}_p$). 
See~\cite{Pott2018} for a summary of a force distribution methods that can be used to determine the torques $\mbs{\tau}$ that generate the desired control wrench. Force distribution is not a contribution of this work and the following analysis and results are valid for any method, provided $\mbf{f} = \mbs{\Pi}^\trans(\mbs{\rho})\mbs{\tau}$.
}

%
\rev{The proposed control input is described by}
\begin{align}
    \mbf{f}
    &=
    \mbf{f}_{ff} + \mbf{f}_{fb},
    \label{eq:TorqWithAdapFeedForward}
\end{align}
where $\mbf{f}_{ff}$ is an adaptive feedforward-based input and $\mbf{f}_{fb}$ is a feedback input. 
The remainder of this section outlines the proposed adaptive feedforward-based and feedback control inputs, formulations of the pose tracking errors for different attitude parameterizations, along with proofs of passivity and closed-loop trajectory tracking convergence.

\subsection{Adaptive Feedforward-Based Control}
The adaptive feedforward\rev{-based control} input is derived by first considering a desired feedforward-based input
\beq
    \mbf{f}_d = 
    \bbm       m_p \mbf{1} & \mbf{0}  \\ \mbf{0} &    \mbf{I}_p     \ebm
    \dot{\mbs{\nu}}_r
    +
    \bbm    \mbf{0} &     \mbf{0} \\ \mbf{0}    &    \mbs{\omega}^{pa^\times}      \mbf{I}_p          \ebm 
    \mbs{\nu}_r
    +
    \bbm        m_pg\mbf{1}_3        \\        \mbf{0}    \ebm
    \label{eq:feedforward_1}
\eeq
where \rev{$m_p \in \mathbb{R}$ and $\mbf{I}_{p} = \mbf{I}_{p}^\trans \in \mathbb{R}^{3 \times 3}$ represent the constant mass and inertia of the payload, and $\mbf{1}_3^\trans = \bbm 0 & 0 & 1 \ebm$. Note that~\eqref{eq:feedforward_1} is equivalent to~\eqref{eq:EoMTask}, with the assumption that the dynamics of the CDPR are dominated by those of its rigid-body payload and the replacement of $\mbs{\nu}$ by $\mbs{\nu}_r$, which represents} the virtual filtered rate trajectory defined as~\cite{Damaren1996}
\begin{align}
    \bnu_r 
    &= \bnu_d - \mbf{P} \mbs{\Lambda} \tilde{\mbf{p}},
    \label{eq:FilteredRates}
\end{align}
where $\mbs{\Lambda} = \mbs{\Lambda}^\trans > 0$ is a control gain, and the terms $\tilde{\mbf{p}} \in \mathbb{R}^6$ and $\bnu_d \in \mathbb{R}^6$ are the pose tracking error and desired augmented velocity, respectively, which are defined in the following subsection for difference choices of attitude parameterizations, along with the matrix $\mbf{P} \in \mathbb{R}^{6 \times 6}$. \rev{The variable $\mbs{\nu}_r$ is related to the virtual reference trajectory in~\cite{slotine1987adaptive}, but is designed in a distinct manner to accommodate various attitude parameterizations, as outlined in Section~\ref{sec:AttParamFormulation}}. Although~\eqref{eq:feedforward_1} includes the desired trajectory, the presence of $\mbftilde{p}$ introduces feedback within $\mbf{f}_d$, which is why the term ``feedforward-based control input'' is used. The feedforward-based input in~\eqref{eq:feedforward_1} can be alternatively written as
\begin{align}
    \label{eq:feedforward}
    \mbf{f}_d &=
    \mbf{W} \mbf{a}, 
\end{align}
where $\mbf{a}^\trans=\bbm m_p & I_{11} & I_{22} & I_{33} & I_{12} & I_{13} & I_{23}  \ebm$, $I_{ij}$ represent the six unique entries of $\mbf{I}_p$, and $\mbf{W} = \frac{\partial \mbf{f}_{d}}{\partial \mbf{a}} $. The term $\mbf{W}$ is a function of $\mbs{\nu}_r$, $\dot{\mbs{\nu}}_r$, and $\mbs{\omega}^{pa}$, while $\mbf{a}$ is equivalent to the minimal parameter formulation developed in~\cite{slotine1987adaptive}.


In practice, the entries of $\mbf{a}$ are not known exactly, so instead an estimate of $\mbf{a}$ is employed, which is denoted as $\hat{\mbf{a}}$. The adaptive control input in~\eqref{eq:TorqWithAdapFeedForward} is defined as
\beq
\label{eq:tau_hat_ff}
\mbf{f}_{ff} = \mbf{W} \hat{\mbf{a}},
\eeq
%
%
where $\hat{\mbf{a}}$ evolves through the adaptive update law
\beq
\label{eq:adapt_update}
\dot{\hat{\mbf{a}}} =
    - \mbs{\Upsilon} \mbf{W}^\trans \tilde{\bnu}_r,
\eeq
and $\mbs{\Upsilon} = \mbs{\Upsilon}^\trans > 0$ is a constant used to adjust the adaptation rate~\cite{Damaren1996}. 
%
%
Subtracting~\eqref{eq:feedforward_1} from \eqref{eq:EoMTask}, assuming the dynamics of the CDPR in~\eqref{eq:EoMTask} are dominated by those of its rigid-body payload, and substituting the expressions for the control inputs~\eqref{eq:TorqWithAdapFeedForward}, and~\eqref{eq:tau_hat_ff} yields the error dynamics
\beq
\label{eq:errordyn}
 \mbf{M}(\mbs{\rho}) \dot{\tilde{\bnu}}_r
    + \mbf{D}(\mbs{\rho}, \bnu) \tilde{\bnu}_r
    = \mbf{f} - \mbf{f}_d
    = \mbf{W} {\tilde{\mbf{a}}} + \mbf{f}_{fb},
\eeq
where $\mbftilde{a} = \mbfhat{a} - \mbf{a}$ and
\beq
\label{eq:nu_r_tilde}
\tilde{\bnu}_r = \bnu - \bnu_r = \bnu - \left(\bnu_d  -  \mbf{P}\mbs{\Lambda} \tilde{\mbf{p}}\right) = \tilde{\bnu} +  \mbf{P} \mbs{\Lambda}\tilde{\mbf{p}},
\eeq
$\tilde{\bnu} = \bnu - \bnu_d$. 
Note that since $\mbf{a}$ is constant, $\dot{\mbftilde{a}} = \dot{\mbfhat{a}}$.

\subsection{Feedback Variable Formulation with Various Parameterizations of Payload Attitude}
\label{sec:AttParamFormulation}


One of the main contributions of this paper is extending the control formulation of~\cite{Damaren1996} to accommodate quaternion and $SO(3)$ attitude parameterizations. This extension relies on the derivation of a suitable filtered error system output for each parameterization that yields a passive input-output mapping and fits within the control formulation of~\cite{Damaren1996}.

The filtered error output is of the form
\begin{align}
    \mbf{s} &= \dot{\tilde{\mbf{p}}} 
            + \mbs{\Lambda} \tilde{\mbf{p}}, \label{eq:svar}
\end{align}
where $\mbs{\Lambda} = \mbs{\Lambda}^\trans >0$ is a proportional-like control gain that is also featured in the definition of $\mbs{\nu}_r$ in~\eqref{eq:FilteredRates}. In order to demonstrate a passive input-output mapping, closed-loop input-output stability, and convergence of the pose tracking error in Section~\ref{sec:Analysis}, it is required that $\tilde{\mbs{\nu}}_r = \mbs{\nu} - \mbs{\nu}_r = \mbf{P} \mbf{s}$, where $\mbs{\nu}_r$ is a function of $\mbs{\nu}_d$, $\mbf{P}$, and $\tilde{\mbf{p}}$. The remainder of this subsection focuses on determining suitable choices of $\mbf{P}$, $\tilde{\mbf{p}}$, and $\mbs{\nu}_d$ that ensures this property is satisfied for different choices of attitude parameterizations.

\subsubsection{Unconstrained Attitude Parameterizations}

Unconstrained attitude parameterizations, such as Euler-angle sequences, the rotation vector, and modified Rodrigues parameters (MRPs), are made up of 3 parameters that are free to evolve in time without any constraints, but suffer from singularities at one or more attitudes. For example, the 3-2-1 Euler-angle sequence described by the parameters $\mbf{q}^{pa^\trans} = \bbm \phi & \theta & \psi \ebm$ has a kinematic singularity at $\theta = \pm \pi/2$, which results in the kinematic mapping matrix
\begin{align}
    \mbf{S}(\mbf{q}^{pa}) &= \bbm 
       1 & 0 & -\sin{\theta} \\
       0 & \cos{\phi} & \sin{\phi} \cos{\theta} \\
       0 & -\sin{\phi} & \cos{\phi} \cos{\theta}
    \ebm,
\end{align}
satisfying $\mbs{\omega}^{pa} = \mbf{S}(\mbf{q}^{pa})  \dot{\mbf{q}}^{pa}$, to become singular~\cite{MarkleyF.Landis2014Fosa}.

\begin{lemma}
\label{theorem:unconstrained}
Consider an unconstrained attitude parameterization $\mbf{q}^{pa} \in \mathbb{R}^3$. The definitions
\begin{align}
    \mbf{P} &= \bbm \mbf{1} & \mbf{0} \\ \mbf{0} & \mbf{S}(\mbf{q}^{pa})\ebm, \label{lemma1:P}\\
    \mbs{\nu}_d &=\mbf{P} \bbm \dot{\mbf{r}}_d \\ \dot{\mbf{q}}^{da}\ebm, \label{lemma1:nud}\\
    \tilde{\mbf{p}} &= \bbm \tilde{\mbf{r}} \\ \mbf{q}^{pa} - \mbf{q}^{da} \ebm, \label{lemma1:ptilde}
\end{align}
where $\mbftilde{r} = \mbf{r}-\mbf{r}_d$, ensure that $\tilde{\mbs{\nu}}_r = \mbf{P} \mbf{s}$.
\end{lemma}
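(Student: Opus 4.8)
The plan is to verify the claimed identity $\tilde{\mbs{\nu}}_r = \mbf{P}\mbf{s}$ directly, by substituting the proposed definitions of $\mbf{P}$, $\mbs{\nu}_d$, and $\tilde{\mbf{p}}$ into the two sides and checking that they agree. From~\eqref{eq:nu_r_tilde} we have $\tilde{\mbs{\nu}}_r = \mbs{\nu} - \mbs{\nu}_d + \mbf{P}\mbs{\Lambda}\tilde{\mbf{p}}$, while the filtered output~\eqref{eq:svar} gives $\mbf{P}\mbf{s} = \mbf{P}\dot{\tilde{\mbf{p}}} + \mbf{P}\mbs{\Lambda}\tilde{\mbf{p}}$. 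Since the $\mbf{P}\mbs{\Lambda}\tilde{\mbf{p}}$ terms already match, it suffices to prove the reduced identity $\mbs{\nu} - \mbs{\nu}_d = \mbf{P}\dot{\tilde{\mbf{p}}}$.

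The key observation is that the augmented payload velocity itself factors through $\mbf{P}$. Because $\mbs{\nu}^\trans = [\dot{\mbf{r}}^\trans \ \mbs{\omega}^{pa^\trans}]$ and the kinematic mapping reads $\mbs{\omega}^{pa} = \mbf{S}(\mbf{q}^{pa})\dot{\mbf{q}}^{pa}$, the block-diagonal form of $\mbf{P}$ in~\eqref{lemma1:P} yields $\mbs{\nu} = \mbf{P}\,[\dot{\mbf{r}}^\trans \ \dot{\mbf{q}}^{pa^\trans}]^\trans$. Comparing with the definition of $\mbs{\nu}_d$ in~\eqref{lemma1:nud}, which is $\mbf{P}\,[\dot{\mbf{r}}_d^\trans \ \dot{\mbf{q}}^{da^\trans}]^\trans$, gives $\mbs{\nu} - \mbs{\nu}_d = \mbf{P}\,[\dot{\tilde{\mbf{r}}}^\trans \ (\dot{\mbf{q}}^{pa} - \dot{\mbf{q}}^{da})^\trans]^\trans$, where $\tilde{\mbf{r}} = \mbf{r} - \mbf{r}_d$. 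On the other hand, differentiating the definition of $\tilde{\mbf{p}}$ in~\eqref{lemma1:ptilde} gives precisely $\dot{\tilde{\mbf{p}}} = [\dot{\tilde{\mbf{r}}}^\trans \ (\dot{\mbf{q}}^{pa} - \dot{\mbf{q}}^{da})^\trans]^\trans$, so $\mbs{\nu} - \mbs{\nu}_d = \mbf{P}\dot{\tilde{\mbf{p}}}$ and the lemma follows.

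The argument is essentially bookkeeping; there is no serious obstacle to overcome. The only point that needs to be recognized is that $\mbf{P}$ has been chosen so that it plays a double role: it is simultaneously the matrix relating $[\dot{\mbf{r}}^\trans \ \dot{\mbf{q}}^{pa^\trans}]^\trans$ to $\mbs{\nu}$ via the parameterization kinematics, and the common left factor appearing in $\mbs{\nu}_r$ through~\eqref{eq:FilteredRates}. Notably, no time derivative of $\mbf{P}$ (equivalently, of $\mbf{S}(\mbf{q}^{pa})$) is ever required, which keeps the manipulation elementary and also foreshadows why a different construction of $\mbf{P}$, $\tilde{\mbf{p}}$, and $\mbs{\nu}_d$ is needed for the quaternion and DCM parameterizations treated subsequently, where such a naive choice is unavailable.
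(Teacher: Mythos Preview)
Your proof is correct and follows essentially the same approach as the paper's own argument: both reduce the identity to the kinematic relation $\mbs{\omega}^{pa} = \mbf{S}(\mbf{q}^{pa})\dot{\mbf{q}}^{pa}$, use the block-diagonal structure of $\mbf{P}$ to write $\mbs{\nu} = \mbf{P}[\dot{\mbf{r}}^\trans\ \dot{\mbf{q}}^{pa^\trans}]^\trans$, and then match terms. The only cosmetic difference is that the paper expands $\mbf{P}\mbf{s}$ directly and identifies it with $\tilde{\mbs{\nu}}_r$, whereas you first cancel the common $\mbf{P}\mbs{\Lambda}\tilde{\mbf{p}}$ term and verify the residual identity $\mbs{\nu}-\mbs{\nu}_d = \mbf{P}\dot{\tilde{\mbf{p}}}$.
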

\begin{proof}
Substituting~\eqref{lemma1:ptilde} into~\eqref{eq:svar} and multiplying by~\eqref{lemma1:P} results in
\beq
\mbf{P}\mbf{s} = \mbf{P} \left(\bbm \dot{\mbf{r}} - \dot{\mbf{r}}_d \\ \dot{\mbf{q}}^{pa} - \dot{\mbf{q}}^{da} \ebm + \mbs{\Lambda} \tilde{\mbf{p}}\right) 
= \mbf{P} \bbm \dot{\mbf{r}} \\ \dot{\mbf{q}}^{pa}\ebm - \mbf{P} \bbm \dot{\mbf{r}}_d \\ \dot{\mbf{q}}^{da}\ebm + \mbf{P}\mbs{\Lambda} \tilde{\mbf{p}}. \label{eq:Lemma1_proof1}
\eeq
Multiplying out the first term in~\eqref{eq:Lemma1_proof1} and using the fact that $\mbs{\omega}^{pa} = \mbf{S}(\mbf{q}^{pa})  \dot{\mbf{q}}^{pa}$ yields
\beq
\label{eq:Lemma1_proof2}
\mbf{P} \bbm \dot{\mbf{r}} \\ \dot{\mbf{q}}^{pa}\ebm = \bbm \dot{\mbf{r}} \\ \mbf{S}(\mbf{q}^{pa}) \dot{\mbf{q}}^{pa} \ebm = \bbm \dot{\mbf{r}} \\ \mbs{\omega}^{pa} \ebm = \bnu.
\eeq
Substituting~\eqref{eq:Lemma1_proof2} into~\eqref{eq:Lemma1_proof1} and using~\eqref{eq:nu_r_tilde} and~\eqref{lemma1:ptilde} gives
$\mbf{P} \mbf{s} = \tilde{\bnu} + \mbf{P}\mbs{\Lambda} \tilde{\mbf{p}} = \tilde{\mbs{\nu}}_r$.
\end{proof}
As in~\cite{Damaren1996}, the definition of $\mbs{\nu}_d$ involves evaluating $\mbf{S}(\mbf{q}^{pa})$ with the payload attitude and not the desired attitude.

\subsubsection{Quaternion}
The quaternion $\mbf{q}^{pa^\trans} = \bbm \mbs{\epsilon}^\trans & \eta \ebm$ is composed of the vector portion $\mbs{\epsilon} \in \mathbb{R}^3$ and scalar part $\eta \in \mathbb{R}$, which satisfy the constraint $\mbf{q}^{pa^\trans} \mbf{q}^{pa} = \mbs{\epsilon}^\trans \mbs{\epsilon} + \eta^2 = 1$.
The quaternion error is defined in~\cite{MarkleyF.Landis2014Fosa,egeland1994passivity} as
\beq 
    \label{eq:quat_err}
    \delta \mbf{q} = \bbm \delta \mbs{\epsilon} \\ \delta \eta \ebm
    = 
    \bbm 
        \eta \bone - \mbs{\epsilon}^\times & \mbs{\epsilon} \\
        -\mbs{\epsilon}^\trans & \eta
    \ebm
    \bbm -\mbs{\epsilon}_d \\ \eta_d \ebm 
    ,
\eeq
where $\mbf{q}^{da^\trans} = \bbm \mbs{\epsilon}_d^\trans & \eta_d \ebm$ is the desired quaternion. 
%
The desired angular velocity is defined as a function of the rate of the desired quaternion as $\mbs{\omega}^{da} = 2
\bbm
    \eta \bone - \mbs{\epsilon}^\times & -\mbs{\epsilon}
\ebm
\dot{\mbf{q}}^{da}$.

\begin{lemma}
\label{theorem:quaternion}
Consider the quaternion attitude parameterization $\mbf{q}^{pa} \in \mathbb{R}^4$. The definitions
\begin{align}
    \mbf{P} &= \bbm \mbf{1} & \mbf{0} \\ \mbf{0} & 2( \delta \eta \bone + \delta \mbs{\epsilon}^\times )^{-1} \ebm, \label{lemma2:P}\\
    \mbs{\nu}_d &= \bbm \dot{\mbf{r}}_d \\ \mbs{\omega}^{da} + 2( \delta \eta \bone + \delta \mbs{\epsilon}^\times )^{-1} \mbs{\omega}^{da^\times} \delta \mbs{\epsilon} \ebm, \label{lemma2:nud} \\
    \tilde{\mbf{p}} &= \bbm \mbftilde{r} \\ \delta \mbs{\epsilon}  \ebm, \label{lemma2:ptilde}
\end{align}
ensure that $\tilde{\mbs{\nu}}_r = \mbf{P} \mbf{s}$, where
$\delta \mbs{\epsilon}$ and $\delta \eta$ are defined in~\eqref{eq:quat_err}.
\end{lemma}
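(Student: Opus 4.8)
The plan is to argue exactly as in the proof of Lemma~\ref{theorem:unconstrained}. Using~\eqref{eq:nu_r_tilde} we have $\tilde{\bnu}_r = \tilde{\bnu} + \mbf{P}\mbs{\Lambda}\tilde{\mbf{p}}$, while $\mbf{P}\mbf{s} = \mbf{P}\dot{\tilde{\mbf{p}}} + \mbf{P}\mbs{\Lambda}\tilde{\mbf{p}}$ by~\eqref{eq:svar}, so the claim $\tilde{\bnu}_r = \mbf{P}\mbf{s}$ is equivalent to $\mbf{P}\dot{\tilde{\mbf{p}}} = \tilde{\bnu} = \bnu - \bnu_d$; no structural assumption on $\mbs{\Lambda}$ is needed, since the $\mbf{P}\mbs{\Lambda}\tilde{\mbf{p}}$ terms cancel identically. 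With $\tilde{\mbf{p}}$, $\mbf{P}$, $\bnu_d$ as in~\eqref{lemma2:P}--\eqref{lemma2:ptilde}, the translational block of $\mbf{P}\dot{\tilde{\mbf{p}}} = \bnu - \bnu_d$ is immediate since $\dot{\mbftilde{r}} = \dot{\mbf{r}} - \dot{\mbf{r}}_d$. Everything therefore reduces to the rotational block,
\[ 2(\delta\eta\bone + \delta\mbs{\epsilon}^\times)^{-1}\dot{\delta\mbs{\epsilon}} = \mbs{\omega}^{pa} - \mbs{\omega}^{da} - 2(\delta\eta\bone + \delta\mbs{\epsilon}^\times)^{-1}\mbs{\omega}^{da^\times}\delta\mbs{\epsilon}, \]
which, after left-multiplying by $\frac{1}{2}(\delta\eta\bone + \delta\mbs{\epsilon}^\times)$, is precisely the error-quaternion kinematic identity
\[ \dot{\delta\mbs{\epsilon}} = \frac{1}{2}(\delta\eta\bone + \delta\mbs{\epsilon}^\times)(\mbs{\omega}^{pa} - \mbs{\omega}^{da}) - \mbs{\omega}^{da^\times}\delta\mbs{\epsilon}. \]
Establishing this identity is the heart of the lemma.

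I would prove the identity in two complementary ways, one to derive it and one to check it. Direct route: differentiate the vector part $\delta\mbs{\epsilon} = -\eta\mbs{\epsilon}_d + \mbs{\epsilon}^\times\mbs{\epsilon}_d + \eta_d\mbs{\epsilon}$ read off from~\eqref{eq:quat_err}, substitute the quaternion kinematics of $\mbf{q}^{pa}$ consistent with the given relation $\mbs{\omega}^{da} = 2\bbm \eta\bone - \mbs{\epsilon}^\times & -\mbs{\epsilon}\ebm\dot{\mbf{q}}^{da}$ (namely $\dot{\mbs{\epsilon}} = \frac{1}{2}(\eta\bone + \mbs{\epsilon}^\times)\mbs{\omega}^{pa}$, $\dot{\eta} = -\frac{1}{2}\mbs{\epsilon}^\trans\mbs{\omega}^{pa}$), use that same relation together with $\mbf{q}^{da^\trans}\dot{\mbf{q}}^{da} = 0$ to eliminate $\dot{\mbf{q}}^{da}$ in favour of $\mbs{\omega}^{da}$, and collapse the result with the cross-operator identity~\eqref{eq:crossidentity} and the unit-norm constraints $\mbs{\epsilon}^\trans\mbs{\epsilon} + \eta^2 = 1$, $\delta\mbs{\epsilon}^\trans\delta\mbs{\epsilon} + \delta\eta^2 = 1$. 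Geometric route: since $\delta\mbf{q}$ in~\eqref{eq:quat_err} parameterizes the DCM $\mbf{C}_{pd}$ of $\mathcal{F}_p$ relative to $\mathcal{F}_d$, it obeys $\dot{\delta\mbs{\epsilon}} = \frac{1}{2}(\delta\eta\bone + \delta\mbs{\epsilon}^\times)\mbs{\omega}^{pd}$ with $\mbs{\omega}^{pd} = \mbs{\omega}^{pa} - \mbf{C}_{pd}\mbs{\omega}^{da}$; substituting $\bone - \mbf{C}_{pd} = 2\,\delta\eta\,\delta\mbs{\epsilon}^\times - 2(\delta\mbs{\epsilon}^\times)^2$ and using $(\delta\mbs{\epsilon}^\times)^3 = -(\delta\mbs{\epsilon}^\trans\delta\mbs{\epsilon})\,\delta\mbs{\epsilon}^\times$ yields $\frac{1}{2}(\delta\eta\bone + \delta\mbs{\epsilon}^\times)(\bone - \mbf{C}_{pd}) = \delta\mbs{\epsilon}^\times$, hence $\frac{1}{2}(\delta\eta\bone + \delta\mbs{\epsilon}^\times)(\bone - \mbf{C}_{pd})\mbs{\omega}^{da} = \delta\mbs{\epsilon}^\times\mbs{\omega}^{da} = -\mbs{\omega}^{da^\times}\delta\mbs{\epsilon}$, which is exactly the discrepancy between $\frac{1}{2}(\delta\eta\bone + \delta\mbs{\epsilon}^\times)\mbs{\omega}^{pd}$ and $\frac{1}{2}(\delta\eta\bone + \delta\mbs{\epsilon}^\times)(\mbs{\omega}^{pa} - \mbs{\omega}^{da})$. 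This makes transparent that the extra term in $\bnu_d$ in~\eqref{lemma2:nud} is present precisely to compensate for $\mbs{\omega}^{da}$ not being resolved in $\mathcal{F}_p$.

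Two bookkeeping points then close the argument. First, $\mbf{P}$ in~\eqref{lemma2:P} is well defined exactly when $\delta\eta\bone + \delta\mbs{\epsilon}^\times$ is invertible; since $\det(\delta\eta\bone + \delta\mbs{\epsilon}^\times) = \delta\eta\,(\delta\eta^2 + \delta\mbs{\epsilon}^\trans\delta\mbs{\epsilon}) = \delta\eta$, this holds iff $\delta\eta \neq 0$, i.e.\ away from an attitude error of $\pi$, and I would record this as a standing assumption, analogous to the singularity of $\mbf{S}(\mbf{q}^{pa})$ noted for the unconstrained case. Second, once $\mbf{P}\dot{\tilde{\mbf{p}}} = \bnu - \bnu_d$ is established, adding back $\mbf{P}\mbs{\Lambda}\tilde{\mbf{p}}$ and invoking~\eqref{eq:nu_r_tilde} gives $\mbf{P}\mbf{s} = \tilde{\bnu} + \mbf{P}\mbs{\Lambda}\tilde{\mbf{p}} = \tilde{\bnu}_r$, as required. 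The main obstacle is the quaternion-error kinematic identity: it is conceptually routine but algebraically delicate, and the easy places to slip are the sign conventions of the quaternion kinematics and the precise placement of the $\mbs{\omega}^{da^\times}\delta\mbs{\epsilon}$ correction; cross-checking the direct derivation against the geometric one through $\mbf{C}_{pd}$ is the safeguard I would use.
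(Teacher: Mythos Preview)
Your proposal is correct and follows essentially the same route as the paper: both reduce the claim to the error-quaternion kinematic identity $\dot{\delta\mbs{\epsilon}} = -\mbs{\omega}^{da^\times}\delta\mbs{\epsilon} + \tfrac{1}{2}(\delta\eta\bone + \delta\mbs{\epsilon}^\times)(\mbs{\omega}^{pa} - \mbs{\omega}^{da})$ and then recombine with the $\mbf{P}\mbs{\Lambda}\tilde{\mbf{p}}$ term via~\eqref{eq:nu_r_tilde}. The only difference is that the paper simply cites this identity from~\cite{MarkleyF.Landis2014Fosa}, whereas you supply two derivations of it (the geometric one through $\mbf{C}_{pd}$ nicely explains why the extra $\mbs{\omega}^{da^\times}\delta\mbs{\epsilon}$ correction in $\bnu_d$ appears); your observation that $\det(\delta\eta\bone + \delta\mbs{\epsilon}^\times) = \delta\eta$ so the singularity is exactly at $\delta\eta = 0$ matches the paper's remark immediately following the proof.
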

\begin{proof}
Substituting~\eqref{lemma2:ptilde} into~\eqref{eq:svar} and multiplying by~\eqref{lemma2:P} results in
\beq
\mbf{P}\mbf{s} = \mbf{P} \left(\bbm \dot{\tilde{\mbf{r}}}  \\ \delta \dot{\mbs{\epsilon}} \ebm + \mbs{\Lambda} \tilde{\mbf{p} }\right) = \bbm \dot{\tilde{\mbf{r}}}  \\ 2( \delta \eta \bone + \delta \mbs{\epsilon}^\times )^{-1}\delta \dot{\mbs{\epsilon}} \ebm + \mbf{P}\mbs{\Lambda} \tilde{\mbf{p}}. \label{eq:Lemma2_proof1}
\eeq
The term $\delta \dot{\mbs{\epsilon}}$ can be expanded using the property~\cite{MarkleyF.Landis2014Fosa} $$\delta \dot{\mbs{\epsilon}} = -\mbs{\omega}^{da^\times} \delta \mbs{\epsilon} 
          + \onehalf ( \delta \eta \bone + \delta \mbs{\epsilon}^\times ) \left(\mbs{\omega}^{pa} - \mbs{\omega}^{da}\right).$$
Substituting this into~\eqref{eq:Lemma2_proof1} and making use of~\eqref{lemma2:nud} yields
\begin{align*}
\mbf{P} \mbf{s} &= \bbm \dot{\tilde{\mbf{r}}}  \\ \mbs{\omega}^{pa} - \mbs{\omega}^{da} - 2( \delta \eta \bone + \delta \mbs{\epsilon}^\times )^{-1}\mbs{\omega}^{da^\times} \delta \mbs{\epsilon} \ebm + \mbf{P}\mbs{\Lambda} \tilde{\mbf{p}} \\
&= \tilde{\bnu} + \mbf{P}\mbs{\Lambda} \tilde{\mbf{p}} = \tilde{\mbs{\nu}}_r.
\end{align*}
\end{proof}
The inverse of the matrix $( \delta \eta \bone + \delta \mbs{\epsilon}^\times )$ that is used to define $\mbf{P}$ exists provided $\delta \eta \neq 0$. This singularity is avoided as long as $\mathcal{F}_p$ and $\mathcal{F}_d$ are within a $\pm \pi/2$~rad rotation of each other, which is to be expected for overconstrained CDPRs.

\subsubsection{$SO(3)$ (The Direction Cosine Matrix)}
The DCM can be used directly with the antisymmetric projection operator to form an attitude error and satisfy the desired property. 
\begin{lemma}
\label{theorem:so3}

Consider an $SO(3)$ description of attitude with the DCM $\mbf{C}_{pa} \in SO(3)$. The definitions
\begin{align}
    \mbf{P} &= \bbm \mbf{1} & \mbf{0} \\ \mbf{0} & -2 \left(
        (\trace(\mbf{C}_{pd} ) \bone - \mbf{C}_{pd} ) 
    \right)^{-1} \ebm, \label{lemma3:P}\\
    \mbs{\nu}_d &= \bbm \dot{\mbf{r}}_d \\ \mbs{\omega}^{da} \ebm, \label{lemma3:nud}\\ 
    \tilde{\mbf{p}} &= \bbm \tilde{\mbf{r}} \\ \mbc{P}(\mbf{C}_{pd})^\mathsf{V} \ebm, \label{lemma3:ptilde}
\end{align}
ensure that $\tilde{\mbs{\nu}}_r = \mbf{P} \mbf{s}$, where
$\mbf{C}_{pd} = \mbf{C}_{pa} \mbf{C}_{da}^\trans$ and $\mbf{C}_{da}$ represents the desired payload attitude.


%
\end{lemma}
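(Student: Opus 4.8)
The plan is to mirror the proofs of Lemmas~\ref{theorem:unconstrained} and~\ref{theorem:quaternion}: substitute the attitude tracking error~\eqref{lemma3:ptilde} into the filtered output~\eqref{eq:svar}, premultiply by $\mbf{P}$ from~\eqref{lemma3:P}, and show the product collapses to $\tilde{\bnu} + \mbf{P}\mbs{\Lambda}\tilde{\mbf{p}}$, which is exactly $\tilde{\bnu}_r$ by~\eqref{eq:nu_r_tilde}. Writing $\mbf{P}\mbf{s} = \mbf{P}\dot{\tilde{\mbf{p}}} + \mbf{P}\mbs{\Lambda}\tilde{\mbf{p}}$, the position block of $\mbf{P}\dot{\tilde{\mbf{p}}}$ is immediately $\dot{\tilde{\mbf{r}}} = \dot{\mbf{r}} - \dot{\mbf{r}}_d$, matching the position block of $\tilde{\bnu} = \bnu - \bnu_d$. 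Everything therefore reduces to showing that the attitude block obeys
\[
-2\bigl(\trace(\mbf{C}_{pd})\bone - \mbf{C}_{pd}\bigr)^{-1}\frac{\mathrm{d}}{\mathrm{d}t}\mbc{P}(\mbf{C}_{pd})^\utimes = \mbs{\omega}^{pa} - \mbs{\omega}^{da},
\]
with $\mbs{\omega}^{da}$ the desired angular velocity from~\eqref{lemma3:nud}, resolved in $\mathcal{F}_p$.

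The core computation is the time derivative of the antisymmetric projection. Applying Poisson's equation to the relative DCM gives $\dot{\mbf{C}}_{pd} = -\mbs{\omega}^{pd^\times}\mbf{C}_{pd}$, where $\mbs{\omega}^{pd} = \mbs{\omega}^{pa} - \mbs{\omega}^{da}$ is the angular velocity of $\mathcal{F}_p$ relative to $\mathcal{F}_d$ resolved in $\mathcal{F}_p$ (which also follows by differentiating $\mbf{C}_{pd} = \mbf{C}_{pa}\mbf{C}_{da}^\trans$ and using angular-velocity addition). Then
\[
\frac{\mathrm{d}}{\mathrm{d}t}\mbc{P}(\mbf{C}_{pd}) = \tfrac{1}{2}\bigl(\dot{\mbf{C}}_{pd} - \dot{\mbf{C}}_{pd}^\trans\bigr) = -\tfrac{1}{2}\bigl(\mbs{\omega}^{pd^\times}\mbf{C}_{pd} + \mbf{C}_{pd}^\trans\mbs{\omega}^{pd^\times}\bigr),
\]
and the cross-operator identity~\eqref{eq:crossidentity} with $\mbf{v} = \mbs{\omega}^{pd}$ and $\mbf{A} = \mbf{C}_{pd}$ collapses this to $\frac{\mathrm{d}}{\mathrm{d}t}\mbc{P}(\mbf{C}_{pd}) = -\tfrac{1}{2}\bigl((\trace(\mbf{C}_{pd})\bone - \mbf{C}_{pd})\mbs{\omega}^{pd}\bigr)^\times$. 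Taking the uncross operator and premultiplying by $-2(\trace(\mbf{C}_{pd})\bone - \mbf{C}_{pd})^{-1}$ recovers $\mbs{\omega}^{pd} = \mbs{\omega}^{pa} - \mbs{\omega}^{da}$, i.e.\ the attitude block of $\tilde{\bnu}$. Stacking the position and attitude blocks gives $\mbf{P}\mbf{s} = \tilde{\bnu} + \mbf{P}\mbs{\Lambda}\tilde{\mbf{p}} = \tilde{\bnu}_r$.

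I expect the main obstacle to be the attitude-block bookkeeping, specifically tracking the frame in which each angular velocity is resolved: the derivative of $\mbf{C}_{pd}$ naturally produces the \emph{relative} angular velocity $\mbs{\omega}^{pd}$ resolved in $\mathcal{F}_p$, so $\mbs{\omega}^{da}$ in $\mbs{\nu}_d$ must likewise be resolved in $\mathcal{F}_p$ for $\mbs{\omega}^{pa} - \mbs{\omega}^{da}$ to be the body-frame tracking error consistent with $\bnu^\trans = [\dot{\mbf{r}}^\trans \,\, \mbs{\omega}^{pa^\trans}]$. If one instead expands $\dot{\mbf{C}}_{pd}$ in terms of $\mbs{\omega}^{pa}$ and the $\mathcal{F}_d$-resolved desired rate, the two resulting terms carry $\mbf{C}_{pd}$ and $\mbf{C}_{pd}^\trans$ and are reconciled only through the rotation-matrix identity $\mbf{C}_{pd}^2 = \trace(\mbf{C}_{pd})\mbf{C}_{pd} - \trace(\mbf{C}_{pd})\bone + \mbf{C}_{pd}^\trans$; routing the derivative through $\mbs{\omega}^{pd}$ as above sidesteps this. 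Finally, as in the remark following Lemma~\ref{theorem:quaternion}, $\trace(\mbf{C}_{pd})\bone - \mbf{C}_{pd}$ is invertible whenever $\mathcal{F}_p$ and $\mathcal{F}_d$ differ by less than $\pm\pi/2$~rad, which holds throughout an overconstrained CDPR's wrench-feasible workspace; this is what makes $\mbf{P}$ in~\eqref{lemma3:P} well defined.
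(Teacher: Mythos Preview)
Your proposal is correct and follows essentially the same route as the paper: substitute~\eqref{lemma3:ptilde} into~\eqref{eq:svar}, premultiply by $\mbf{P}$, use Poisson's equation $\dot{\mbf{C}}_{pd} = -\mbs{\omega}^{pd^\times}\mbf{C}_{pd}$ together with the identity~\eqref{eq:crossidentity} to reduce $\tfrac{\mathrm{d}}{\mathrm{d}t}\mbc{P}(\mbf{C}_{pd})^\utimes$ to $-\tfrac{1}{2}(\trace(\mbf{C}_{pd})\bone - \mbf{C}_{pd})\tilde{\mbs{\omega}}$, and conclude $\mbf{P}\mbf{s} = \tilde{\bnu} + \mbf{P}\mbs{\Lambda}\tilde{\mbf{p}} = \tilde{\bnu}_r$. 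Your added remarks on the frame in which $\mbs{\omega}^{da}$ must be resolved and on the invertibility condition $\trace(\mbf{C}_{pd})\neq 1$ are consistent with the paper's treatment.
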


\begin{proof}
Substituting~\eqref{lemma3:ptilde} into~\eqref{eq:svar} and multiplying by~\eqref{lemma3:ptilde} gives
\beq
\mbf{P}\mbf{s} = \mbf{P} \left(\bbm \dot{\tilde{\mbf{r}}}  \\ \frac{\mathrm{d}}{\mathrm{d}t}\left(\mbc{P}(\mbf{C}_{pd})^\mathsf{V}\right) \ebm + \mbs{\Lambda} \tilde{\mbf{p} }\right). \label{eq:Lemma3_proof1}
\eeq
Poisson's equation, $\dot{\mbf{C}}_{pd} = -\mbs{\omega}^{pd^\times} \mbf{C}_{pd}$, and the identities in~\eqref{eq:ProjProperty} and~\eqref{eq:crossidentity} are used to compute
\begin{align}
\frac{\mathrm{d}}{\mathrm{d}t}\left(\mbc{P}(\mbf{C}_{pd})^\mathsf{V}\right) 
    &= -\onehalf \left(
        \tilde{\mbs{\omega}}^{\times}{\mbf{C}}_{pd} +  {\mbf{C}}_{pd}^\trans \tilde{\mbs{\omega}}^{\times}
    \right)^\mathsf{V} \nonumber \\
    &= -\onehalf \left(\left(\left(
        \trace(\mbf{C}_{pd} ) \bone - \mbf{C}_{pd} 
    \right)   \tilde{\mbs{\omega}}\right)^\times
    \right)^\mathsf{V} \nonumber \\
    &= -\onehalf \left(
        \trace(\mbf{C}_{pd} ) \bone - \mbf{C}_{pd} 
    \right)   \tilde{\mbs{\omega}}, \label{eq:Lemma3_proof2}
\end{align}
where $\tilde{\mbs{\omega}} = \mbs{\omega}^{pd} = \mbs{\omega}^{pa} - \mbs{\omega}^{da}$. Substituting~\eqref{eq:Lemma3_proof2} into~\eqref{eq:Lemma3_proof1} and using~\eqref{lemma3:nud} yields
\bdis
\mbf{P} \mbf{s} = \bbm \dot{\tilde{\mbf{r}}}  \\ \tilde{\mbs{\omega}} \ebm + \mbf{P}\mbs{\Lambda} \tilde{\mbf{p}} = \tilde{\bnu} + \mbf{P}\mbs{\Lambda} \tilde{\mbf{p}} = \tilde{\mbs{\nu}}_r.
\edis
\end{proof}

The inverse of $\left(
        (\trace(\mbf{C}_{pd} ) \bone - \mbf{C}_{pd} ) 
    \right)$ in the definition of $\mbf{P}$ exists as long as $\trace(\mbf{C}_{pd}) \neq 1$. Similar to the case of the quaternion, this singularity is avoided as long as $\mathcal{F}_p$ and $\mathcal{F}_d$ are within a $\pm \pi/2$~rad rotation of each other, which is large enough to account for the wrench-feasible workspaces of most over-constrained CDPRs.

It is worth noting that the term $\mbf{W}$ in the adaptive feedforward-based control input of~\eqref{eq:tau_hat_ff} relies on the computation of $\dot{\mbs{\nu}}_r = \dot{\mbs{\nu}}_d - \left(\dot{\mbf{P}} \mbs{\Lambda} \tilde{\mbf{p}} + \mbf{P} \mbs{\Lambda} \dot{\tilde{\mbf{p}}}\right)$, which requires an expression for $\dot{\mbf{P}}$.  For the case of unconstrained attitude parameterizations, such as a 3-2-1 Euler-angle sequence, this involves simply taking the time derivative of $\mbf{S}(\mbf{q}^{pa})$.  For the quaternion, this computation is more involved, where $\dot{\mbf{P}}$ is solved for using the matrix product rule $\frac{\mathrm{d}}{\mathrm{d}t}\left(\mbf{A}^{-1}\right) = - \mbf{A}^{-1} \dot{\mbf{A}} \mbf{A}^{-1}$ to obtain
\bdis
    \dot{\mbf{P}} = \text{diag}\{\mbf{0},-2( \delta \eta \bone + \delta \mbs{\epsilon}^\times )^{-1} ( \delta \dot{\eta} \bone + \delta \dot{\mbs{\epsilon}}^\times ) ( \delta \eta \bone + \delta \mbs{\epsilon}^\times )^{-1}\},
\edis
where $\delta \dot{\eta}$ and $\delta \dot{\mbs{\epsilon}}$ are found by differentiating~\eqref{eq:quat_err} with respect to time.
A similar procedure is used to compute $\dot{\mbf{P}}$ when using the $SO(3)$ description of attitude, where
\begin{align*}
    \dot{\mbf{P}} &= \text{diag}\{\mbf{0},-2\mbs{\Gamma}\left(\trace(\dot{\mbf{C}}_{pd}) \mbf{1} - \dot{\mbf{C}}_{pd}\right) \mbs{\Gamma}\} \\
    &= \text{diag}\{\mbf{0},-2\mbs{\Gamma}\left(-\trace(\mbstilde{\omega}^{\times} \mbf{C}_{pd}) \mbf{1} + \mbstilde{\omega}^{\times} \mbf{C}_{pd}\right) \mbs{\Gamma}\},
\end{align*}
$\mbs{\Gamma} = \left(\trace(\mbf{C}_{pd}) \mbf{1} - \mbf{C}_{pd}\right)^{-1}$, and Poisson's equation is used to simplify the expression for $\dot{\mbf{C}}_{pd}$.

\subsection{Passivity and Closed-Loop Stability Analyses}
\label{sec:Analysis}


\begin{theorem}
\label{theorem1}
   Consider a CDPR with error dynamics defined in~\eqref{eq:errordyn} and the adaptive feedforward-based control input of~\eqref{eq:TorqWithAdapFeedForward} with  the adaptive update law in~\eqref{eq:adapt_update}. Assuming that the dynamics of the CDPR are dominated by its rigid-body payload, the input-output mapping $ \bar{\mbf{f}}_{fb} \mapsto \mbf{s}$ is passive, where $\bar{\mbf{f}}_{fb} = \mbf{P}^\trans \mbf{f}_{fb}$.
\end{theorem}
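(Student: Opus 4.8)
The plan is to exhibit a nonnegative storage function whose time derivative along the closed-loop error dynamics equals exactly $\mbf{s}^\trans \bar{\mbf{f}}_{fb}$; integrating that identity then yields the passivity inequality~\eqref{eq:DefISP} with $\delta = 0$. Following the Slotine--Li-style construction used in~\cite{slotine1987adaptive,Damaren1996}, the natural candidate is
\beq
V = \onehalf \tilde{\bnu}_r^\trans \mbf{M}(\mbs{\rho}) \tilde{\bnu}_r + \onehalf \tilde{\mbf{a}}^\trans \mbs{\Upsilon}^{-1} \tilde{\mbf{a}},
\eeq
which is nonnegative since $\mbf{M}(\mbs{\rho}) = \mbf{M}^\trans(\mbs{\rho}) > 0$ and $\mbs{\Upsilon} = \mbs{\Upsilon}^\trans > 0$.

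First I would differentiate $V$ along trajectories, using $\dot{\tilde{\mbf{a}}} = \dot{\hat{\mbf{a}}}$ (since $\mbf{a}$ is constant) to get $\dot{V} = \tilde{\bnu}_r^\trans \mbf{M} \dot{\tilde{\bnu}}_r + \onehalf \tilde{\bnu}_r^\trans \dot{\mbf{M}} \tilde{\bnu}_r + \tilde{\mbf{a}}^\trans \mbs{\Upsilon}^{-1} \dot{\hat{\mbf{a}}}$. Substituting $\mbf{M}\dot{\tilde{\bnu}}_r = -\mbf{D}\tilde{\bnu}_r + \mbf{W}\tilde{\mbf{a}} + \mbf{f}_{fb}$ from the error dynamics~\eqref{eq:errordyn} collects the quadratic terms into $\onehalf \tilde{\bnu}_r^\trans (\dot{\mbf{M}} - 2\mbf{D}) \tilde{\bnu}_r$, which vanishes by the skew-symmetry of $\dot{\mbf{M}}(\mbs{\rho}) - 2\mbf{D}(\mbs{\rho},\bnu)$. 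What remains is $\dot{V} = \tilde{\bnu}_r^\trans \mbf{W}\tilde{\mbf{a}} + \tilde{\bnu}_r^\trans \mbf{f}_{fb} + \tilde{\mbf{a}}^\trans \mbs{\Upsilon}^{-1}\dot{\hat{\mbf{a}}}$. Inserting the adaptive update law~\eqref{eq:adapt_update}, $\dot{\hat{\mbf{a}}} = -\mbs{\Upsilon}\mbf{W}^\trans\tilde{\bnu}_r$, makes the last term $-\tilde{\mbf{a}}^\trans \mbf{W}^\trans \tilde{\bnu}_r = -\tilde{\bnu}_r^\trans \mbf{W}\tilde{\mbf{a}}$, which cancels the adaptation term exactly, leaving $\dot{V} = \tilde{\bnu}_r^\trans \mbf{f}_{fb}$.

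The last step invokes the parameterization-specific identity established in Lemmas~\ref{theorem:unconstrained}, \ref{theorem:quaternion}, and \ref{theorem:so3}, namely $\tilde{\bnu}_r = \mbf{P}\mbf{s}$, so that $\dot{V} = (\mbf{P}\mbf{s})^\trans \mbf{f}_{fb} = \mbf{s}^\trans \mbf{P}^\trans \mbf{f}_{fb} = \mbf{s}^\trans \bar{\mbf{f}}_{fb}$. Integrating from $0$ to $T$ gives $\int_0^T \mbf{s}^\trans(t)\bar{\mbf{f}}_{fb}(t)\,\mathrm{d}t = V(T) - V(0) \geq -V(0)$, since $V(T) \geq 0$. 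Setting $\beta = -V(0)$, a constant depending only on $\tilde{\bnu}_r(0)$ and $\tilde{\mbf{a}}(0)$, recovers~\eqref{eq:DefISP} with $\delta = 0$, which is precisely passivity of $\bar{\mbf{f}}_{fb} \mapsto \mbf{s}$.

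I expect the only genuine subtlety to be the standing modeling assumption rather than the algebra: the error dynamics~\eqref{eq:errordyn} — and hence the clean identity $\dot{V} = \tilde{\bnu}_r^\trans \mbf{f}_{fb}$ — hold exactly only when the CDPR dynamics are truly dominated by the rigid-body payload. Under that assumption the argument is a direct energy computation with the textbook adaptive-control cancellation, and no further obstacle arises; the $\tilde{\bnu}_r = \mbf{P}\mbf{s}$ property supplied by the lemmas is exactly what makes the output $\mbf{s}$ (rather than $\tilde{\bnu}_r$) the passive conjugate of the transformed input $\bar{\mbf{f}}_{fb} = \mbf{P}^\trans\mbf{f}_{fb}$.
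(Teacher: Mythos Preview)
Your proof is correct and follows essentially the same approach as the paper: the same storage function $V = \onehalf \tilde{\bnu}_r^\trans \mbf{M} \tilde{\bnu}_r + \onehalf \tilde{\mbf{a}}^\trans \mbs{\Upsilon}^{-1} \tilde{\mbf{a}}$, the same use of the skew-symmetry of $\dot{\mbf{M}} - 2\mbf{D}$ and the adaptive update law to cancel the regressor term, and the same invocation of $\tilde{\bnu}_r = \mbf{P}\mbf{s}$ followed by integration to obtain the passivity inequality with $\beta = -V(0)$.
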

\begin{proof}
%
%
%
Define the non-negative function $$V_1 = 
    \onehalf \tilde{\bnu}_r^\trans \mbf{M} \tilde{\bnu}_r
    + \onehalf \tilde{\mbf{a}}^\trans \mbs{\Upsilon}^{-1} \tilde{\mbf{a}}.$$ 
    Taking the derivative of $V_1$, substituting in the adaptive update law and~\eqref{eq:errordyn} results in
\begin{align}
    \dot{V}_1 &=
    \tilde{\bnu}_r^\trans \mbf{M} \dot{\tilde{\bnu}}_r
    + \onehalf \tilde{\bnu}_r^\trans \dot{\mbf{M}} {\tilde{\bnu}}_r
    + \tilde{\mbf{a}}^\trans \mbs{\Upsilon}^{-1} \dot{\tilde{\mbf{a}}}
    \nonumber \\ 
    &= \tilde{\bnu}_r^\trans \left(\mbf{W} {\tilde{\mbf{a}}} + \mbf{f}_{fb}\right)
    + \onehalf \tilde{\bnu}_r^\trans 
    (\dot{\mbf{M}} - 2 \mbf{D})
    \tilde{\bnu}_r - \tilde{\mbf{a}}^\trans\mbf{W}^\trans \tilde{\bnu}_r \nonumber\\
    &= \tilde{\bnu}_r^\trans \mbf{f}_{fb} 
    =
    (\mbf{P} \mbf{s})^\trans \mbf{f}_{fb}  
    =
    \mbf{s}^\trans \bar{\mbf{f}}_{fb}.
    \label{eq:adaptive_passive}
\end{align}
Integrating~\eqref{eq:adaptive_passive} from $t=0$ to $t = T$, where $T \in \mathbb{R}_{\geq 0}$ gives $$\int_0^T \mbf{s}^\trans \bar{\mbf{f}}_{fb} \mathrm{d} t = V_1(T) - V_1(0) \geq -V_1(0),$$ 
which proves the mapping $ \bar{\mbf{f}}_{fb} \mapsto \mbf{s}$ is passive. 
\end{proof}

\begin{figure}[t!]
\centering
    \includegraphics[width=0.49\textwidth]{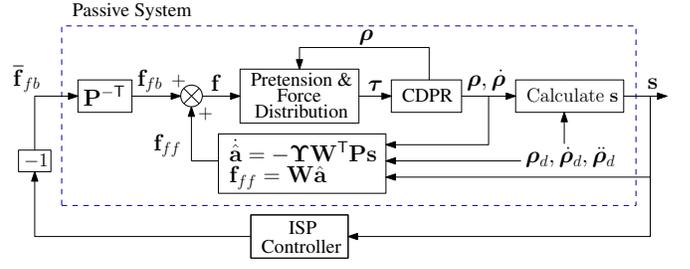}
\centering
\caption{Block diagram of a CDPR with an adaptive feedforward-based control input and pretension \& force distribution satisfying $\mbf{f} = \mbs{\Pi}^\trans(\mbs{\rho})\mbs{\tau}$
which is proven to be passive in Theorem~\ref{theorem1}. The passive system is in negative feedback with an ISP controller.}
	\label{fig:IO_Stable}
\vspace{-10pt}
\end{figure}

\begin{corollary}
\label{corollary1}
The closed-loop system involving the CDPR with error dynamics defined in
~\eqref{eq:errordyn}
, the adaptive feedforward-based control input of~\eqref{eq:TorqWithAdapFeedForward}, and an ISP negative feedback controller (or alternatively a linear time-invariant (LTI) strictly positive real (SPR) negative feedback controller) is input-output stable (i.e., $\mbf{s} \in \mathcal{L}_2$).
\end{corollary}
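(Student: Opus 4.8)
The plan is to apply the passivity theorem to the negative-feedback interconnection (the one depicted in Fig.~\ref{fig:IO_Stable}) of the passive mapping $\bar{\mbf{f}}_{fb} \mapsto \mbf{s}$ established in Theorem~\ref{theorem1} with the strictly passive feedback controller. First I would record the controller's ISP property. Because the controller occupies the feedback path, its input is $\mbf{s}$ and its output is $-\bar{\mbf{f}}_{fb}$ (for the stability claim it suffices to take any external loop reference to be zero, or in $\mathcal{L}_2$), so by the definition of input strict passivity there exist $\delta \in \mathbb{R}_{>0}$ and $\beta_c \in \mathbb{R}$ such that $\int_0^T \mbf{s}^\trans \bar{\mbf{f}}_{fb}\, \mathrm{d}t \leq -\delta \norm{\mbf{s}_T}_2^2 - \beta_c$ for every $T \in \mathbb{R}_{\geq 0}$.

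Next I would combine this bound with the dissipation equality obtained in the proof of Theorem~\ref{theorem1}, namely $\int_0^T \mbf{s}^\trans \bar{\mbf{f}}_{fb}\,\mathrm{d}t = V_1(T) - V_1(0) \geq -V_1(0)$, which holds because $V_1 \geq 0$. Chaining the two inequalities gives $\delta \norm{\mbf{s}_T}_2^2 \leq V_1(0) - \beta_c$ for all $T$, where the right-hand side is a finite constant independent of $T$. Letting $T \to \infty$ then yields $\norm{\mbf{s}}_2^2 \leq (V_1(0) - \beta_c)/\delta < \infty$, i.e.\ $\mbf{s} \in \mathcal{L}_2$, which is precisely the asserted input-output stability.

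For the alternative of an LTI SPR feedback controller, I would reduce to the ISP case: a strictly positive real transfer matrix admits a minimal state-space realization whose input-output mapping is input strictly passive (via the Kalman--Yakubovich--Popov / positive-real lemma), so the argument above applies unchanged.

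I expect the only delicate points to be (i) well-posedness of the feedback interconnection, so that all truncated signals lie in $\mathcal{L}_{2e}$ and the truncated inner products used above are well defined, and (ii) the SPR-to-ISP reduction, which requires the standard regularity/minimality hypotheses of the positive-real lemma; the inequality chaining itself is routine. It is worth noting that this corollary delivers only $\mbf{s} \in \mathcal{L}_2$ and that establishing convergence of the pose error $\tilde{\mbf{p}}$ itself would require the additional boundedness and limiting arguments.
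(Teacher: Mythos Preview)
Your argument for the ISP case is correct and is exactly the route the paper takes: the paper simply invokes the passivity theorem (citing Brogliato) for the negative-feedback interconnection of the passive map $\bar{\mbf{f}}_{fb}\mapsto\mbf{s}$ with an ISP controller, while you spell out the underlying inequality chaining explicitly. No substantive difference there.

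The SPR reduction, however, has a gap. Your claim that an LTI SPR controller is automatically input strictly passive via the KYP lemma is not true in general. For a strictly proper SPR transfer matrix (such as the first-order low-pass filter $\mbf{G}_c(s)=\mbf{K}_d\,\mathrm{diag}\{\omega_c/(s+\omega_c)\}$ used in the paper), one has $\mbf{G}_c(j\omega)+\mbf{G}_c^*(j\omega)\to \mbf{0}$ as $\omega\to\infty$, so no uniform lower bound $\delta\mbf{1}$ can hold and the map is not ISP; the KYP conditions $\mbf{P}_c\mbf{A}_c+\mbf{A}_c^\trans\mbf{P}_c=-\mbf{Q}_c$, $\mbf{P}_c\mbf{B}_c=\mbf{C}_c^\trans$ give state (equivalently output) strict passivity, not input strict passivity. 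Hence your reduction does not deliver the bound $\int_0^T \mbf{s}^\trans\bar{\mbf{f}}_{fb}\,\mathrm{d}t\le -\delta\norm{\mbf{s}_T}_2^2-\beta_c$ in the SPR case, and the chain breaks.

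The paper does not reduce SPR to ISP; it treats the SPR alternative separately by citing a dedicated result (Theorem~8.10 in Marquez). Concretely, the argument that works is the combined-storage-function computation you see later in the proof of Theorem~\ref{theorem:stability}: with $V_2=V_1+\mbf{x}_c^\trans\mbf{P}_c\mbf{x}_c$ and the KYP identities one obtains $\dot{V}_2\le -\lambda_{\min}(\mbf{Q}_c)\,\mbf{x}_c^\trans\mbf{x}_c$, from which the required $\mathcal{L}_2$ bounds follow. If you want a self-contained SPR proof here, that is the route to take rather than an ISP reduction.
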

\begin{proof}
Knowing that the input-output mapping $\bar{\mbs{\tau}}_{fb} \mapsto \mbf{s}$ is passive and an ISP controller is implemented in a negative feedback connection with this mapping (see Fig.~\ref{fig:IO_Stable}), the passivity theorem guarantees that $\mbf{s} \in \mathcal{L}_2$~\cite[p.~358]{Brogliato2020}. In the case of an SPR feedback controller, Theorem~8.10 in~\cite[p.~219]{MarquezBook} can be used to obtain the same result.
\end{proof}

Corollary~\ref{corollary1} guarantees closed-loop input-output stability with the use of any ISP or SPR feedback controller. This result does not rely on exact knowledge of the parameters of the CDPR's dynamics, and thus, robust closed-loop input-output stability is guaranteed. However, it is worth noting that robustness to pose estimation error is not guaranteed and falls beyond the scope of this work. Although there are a number of ISP controllers that can be used to ensure closed-loop input-output stability, an SPR controller with transfer matrix $\mbf{G}_c(s) = \mbf{C}_c \left(s\mbf{1} - \mbf{A}_c\right)^{-1} \mbf{B}_c$ is considered in this paper. The SPR property of $\mbf{G}_c(s)$ ensures that there exist $\mbf{P}_c = \mbf{P}_c^\trans > 0$ and $\mbf{Q}_c = \mbf{Q}_c^\trans > 0$ such that~\cite[p.~93]{Brogliato2020}
\begin{align*}
    \mbf{P}_c \mbf{A}_c + \mbf{A}_c^\trans \mbf{P}_c &=- \mbf{Q}_c, \\
    \mbf{P}_c \mbf{B}_c &= \mbf{C}_c^\trans.
\end{align*}
The feedback control input is then chosen as $\mbf{f}_{fb} = - \mbf{P}^{-\trans} \mbf{y}_c$, which results in $\bar{\mbf{f}}_{fb} = \mbf{P}^\trans \mbf{f}_{fb} =  -\mbf{P}^\trans\left(\mbf{P}^{-\trans} \mbf{y}_c\right) = - \mbf{y}_c$, where $\mbf{y}_c(s) = \mbf{G}_c(s) \mbf{s}(s)$. 

\begin{theorem}
\label{theorem:stability}
The control law in~\eqref{eq:TorqWithAdapFeedForward} and~\eqref{eq:tau_hat_ff}, where $\mbf{f}_{fb} = -\mbf{P}^{-\trans}\mbf{y}_c$ and $\mbf{y}_c$ is the output of an SPR controller with input $\mbf{s}$, ensures asymptotic convergence of the pose tracking and rate tracking errors (i.e., $\tilde{\mbf{p}} \to \mbf{0}$ and $\tilde{\mbs{\nu}} \to \mbf{0}$ as $t \to \infty$), when applied to the CDPR with dynamics given by~\eqref{eq:EoMTask}.
\end{theorem}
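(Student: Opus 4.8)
The plan is to build a Lyapunov-like storage function for the full closed-loop system—CDPR error dynamics in negative feedback with the SPR controller—and then invoke a Barbalat-type argument to conclude $\mbf{s} \to \mbf{0}$, from which convergence of $\tilde{\mbf{p}}$ and $\tilde{\bnu}$ follows. The natural candidate combines the function $V_1$ from Theorem~\ref{theorem1} with the storage function of the SPR controller: define
\[
V = \onehalf \tilde{\bnu}_r^\trans \mbf{M} \tilde{\bnu}_r + \onehalf \tilde{\mbf{a}}^\trans \mbs{\Upsilon}^{-1} \tilde{\mbf{a}} + \onehalf \mbf{x}_c^\trans \mbf{P}_c \mbf{x}_c,
\]
where $\mbf{x}_c$ is the state of the SPR controller (with $\dot{\mbf{x}}_c = \mbf{A}_c \mbf{x}_c + \mbf{B}_c \mbf{s}$, $\mbf{y}_c = \mbf{C}_c \mbf{x}_c$). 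Using~\eqref{eq:adaptive_passive}, which already establishes $\dot{V}_1 = \mbf{s}^\trans \bar{\mbf{f}}_{fb}$, together with the KYP relations $\mbf{P}_c\mbf{A}_c + \mbf{A}_c^\trans\mbf{P}_c = -\mbf{Q}_c$ and $\mbf{P}_c\mbf{B}_c = \mbf{C}_c^\trans$, and the closed-loop interconnection $\bar{\mbf{f}}_{fb} = -\mbf{y}_c = -\mbf{C}_c\mbf{x}_c$, I expect the cross terms to telescope and yield
\[
\dot{V} = -\onehalf \mbf{x}_c^\trans \mbf{Q}_c \mbf{x}_c \leq 0.
\]

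From $\dot{V} \leq 0$ I would conclude $V$ is bounded, hence $\tilde{\bnu}_r$, $\tilde{\mbf{a}}$, and $\mbf{x}_c$ are all bounded. Combined with Corollary~\ref{corollary1}, which gives $\mbf{s} \in \mathcal{L}_2$, and using the definition $\mbf{s} = \dot{\tilde{\mbf{p}}} + \mbs{\Lambda}\tilde{\mbf{p}}$ as a stable linear filter driven by $\mbf{s}$, I get $\tilde{\mbf{p}} \in \mathcal{L}_2 \cap \mathcal{L}_\infty$ and $\dot{\tilde{\mbf{p}}} \in \mathcal{L}_2$. To upgrade $\mathcal{L}_2$ to asymptotic convergence I would apply Barbalat's lemma: show $\mbf{s}$ (equivalently $\tilde{\mbf{p}}$ and $\dot{\tilde{\mbf{p}}}$) is uniformly continuous by checking that $\dot{\mbf{s}}$ is bounded, which requires rewriting $\dot{\mbf{s}}$ from the error dynamics~\eqref{eq:errordyn} via $\tilde{\bnu}_r = \mbf{P}\mbf{s}$ (so $\mbf{s} = \mbf{P}^{-1}\tilde{\bnu}_r$ and $\dot{\mbf{s}} = \dot{\mbf{P}}^{-1}\tilde{\bnu}_r + \mbf{P}^{-1}\dot{\tilde{\bnu}}_r$), then bounding every term using boundedness of $\mbf{M}^{-1}$, $\mbf{D}$, $\mbf{W}$, $\mbf{P}$, $\dot{\mbf{P}}$, the controller output, and the desired trajectories. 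This gives $\mbf{s} \to \mbf{0}$, and since $\dot{\tilde{\mbf{p}}} + \mbs{\Lambda}\tilde{\mbf{p}} = \mbf{s} \to \mbf{0}$ with $\mbs{\Lambda} > 0$, a second Barbalat or input-to-state argument on this exponentially stable filter yields $\tilde{\mbf{p}} \to \mbf{0}$ and $\dot{\tilde{\mbf{p}}} \to \mbf{0}$. Finally $\tilde{\bnu} = \tilde{\bnu}_r - \mbf{P}\mbs{\Lambda}\tilde{\mbf{p}} = \mbf{P}\mbf{s} - \mbf{P}\mbs{\Lambda}\tilde{\mbf{p}} \to \mbf{0}$.

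The main obstacle I anticipate is the uniform-continuity/boundedness-of-$\dot{\mbf{s}}$ step: it is not automatic that $\dot{\mbf{P}}$, $\dot{\mbf{P}}^{-1}$, and $\dot{\mbf{M}}$ stay bounded, since they depend on $\mbs{\nu}$ and on the attitude parameterization, so one must first establish that the actual trajectory $\mbs{\rho}$, $\mbs{\nu}$ remains bounded (using boundedness of $\tilde{\bnu}_r$, $\tilde{\mbf{p}}$ and of the desired trajectories, plus the assumption that the configuration stays within the wrench-feasible, singularity-free workspace so that $\mbf{P}^{-1}$ and $\mbf{S}(\mbf{q}^{pa})$ are well-defined and bounded). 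A secondary subtlety is that $\hat{\mbf{a}}$ and hence $\mbf{f}_{ff}$ must be shown bounded to close the loop on boundedness of $\mbs{\nu}$; this follows from boundedness of $\tilde{\mbf{a}}$ via $V$, but the argument should be stated carefully to avoid circularity—I would either assume persistency-of-excitation is not needed (only tracking, not parameter convergence, is claimed) or note that $\hat{\mbf{a}} \in \mathcal{L}_\infty$ already from $V \leq V(0)$. Once boundedness of all signals is in hand, the Barbalat argument is routine.
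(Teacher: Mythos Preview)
Your proposal is correct and follows essentially the same route as the paper: combine $V_1$ with the SPR storage $\mbf{x}_c^\trans \mbf{P}_c \mbf{x}_c$, obtain $\dot V \le -\mbf{x}_c^\trans \mbf{Q}_c \mbf{x}_c$, read off boundedness of $\tilde{\bnu}_r,\tilde{\mbf{a}},\mbf{x}_c$, use $\mbf{s}\in\mathcal{L}_2$ from Corollary~\ref{corollary1}, and finish with a Barbalat argument. The only noteworthy difference is that the paper applies Barbalat to $\tilde{\bnu}_r$ rather than to $\mbf{s}$ (bounding $\dot{\tilde{\bnu}}_r$ directly from~\eqref{eq:errordyn}), and obtains $\tilde{\mbf{p}}\to\mbf{0}$ immediately from the stable filter $\dot{\tilde{\mbf{p}}}=-\mbs{\Lambda}\tilde{\mbf{p}}+\mbf{s}$ with $\mathcal{L}_2$ input; this sidesteps your need to bound $\dot{\mbf{P}}^{-1}$ when differentiating $\mbf{s}=\mbf{P}^{-1}\tilde{\bnu}_r$, but otherwise the arguments are interchangeable.
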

\begin{proof}
From Corollary~\ref{corollary1}, it is known that $\mbf{s} \in \mathcal{L}_2$. Rearranging~\eqref{eq:svar} yields $\dot{\mbftilde{p}} = - \mbs{\Lambda} \mbftilde{p} + \mbf{s}$, which is an asymptotically stable LTI system whose input is in $\mathcal{L}_2$. This results in $\mbftilde{p} \in \mathcal{L}_2 \cap \mathcal{L}_\infty$, $\dot{\mbftilde{p}} \in \mathcal{L}_2$, and $\mbftilde{p} \to \mbf{0}$ as $t \to \infty$~\cite[p.~269]{Brogliato2020}. 
To prove that $\mbstilde{\nu} \to \mbf{0}$ as $t \to \infty$, define the non-negative function 
\bdis
V_2 = V_1 + \mbf{x}_c^\trans \mbf{P}_c \mbf{x}_c,
\edis
where $\mbf{P}_c = \mbf{P}_c^\trans > 0$. Making use of the SPR property of the feedback controller, the time derivative of $V_2$ is
\begin{align}
    \dot{V}_2 &= -\mbf{s}^\trans\mbf{y}_c + \mbf{x}_c^\trans \left( \mbf{P}_c\mbf{A}_c + \mbf{A}_c^\trans \mbf{P}_c\right) \mbf{x}_c + \mbf{x}_c^\trans \mbf{P}_c \mbf{B}_c \mbf{s} \nonumber \\
    &\leq -\mbf{s}^\trans\mbf{y}_c -  \mbf{x}_c^\trans \mbf{Q}_c \mbf{x}_c + \mbf{x}_c^\trans\mbf{C}_c^\trans \mbf{s} \nonumber \\
    &\leq -\mbf{s}^\trans\mbf{y}_c - \lambda_{\textrm{min}}(\mbf{Q}_c) \mbf{x}_c^\trans \mbf{x}_c + \mbf{y}_c^\trans \mbf{s} \nonumber \\
    &\leq - \lambda_{\textrm{min}}(\mbf{Q}_c) \mbf{x}_c^\trans \mbf{x}_c \leq 0. \label{eq:proof3a}
\end{align}
Integrating~\eqref{eq:proof3a} from $t = 0$ to $t = T$ results in $V_2(T) \leq V_2(0)$, which proves that $\{\mbstilde{\nu}_r,\mbftilde{a},\mbf{x}_c\} \in \mathcal{L}_\infty$. Through the relationship $\mbf{s} = \mbf{P}^{-1} \mbstilde{\nu}_r$, where $\mbf{P}^{-1}$ is bounded, it is known that $\mbf{s} \in \mathcal{L}_\infty$. This also results in $\dot{\mbftilde{p}} \in \mathcal{L}_\infty$, since $\dot{\mbftilde{p}} = - \mbs{\Lambda} \mbftilde{p} + \mbf{s}$. 
Assuming that $\mbs{\nu}_d \in \mathcal{L}_\infty$ and $\mbf{P}$ is bounded, $\mbftilde{p} \in \mathcal{L}_\infty$ ensures that $\mbs{\nu}_r \in \mathcal{L}_\infty$ through~\eqref{eq:FilteredRates}. Taking the time derivative of~\eqref{eq:FilteredRates} yields $\dot{\mbs{\nu}}_r = \dot{\mbs{\nu}}_d - \left( \dot{\mbf{P}} \mbs{\Lambda} \mbftilde{p} + \mbf{P} \mbs{\Lambda} \dot{\mbftilde{p}}\right)$. Assuming that $\dot{\mbs{\nu}}_r \in \mathcal{L}_\infty$ and $\dot{\mbf{P}}$ is bounded, $\{\mbftilde{p},\dot{\mbftilde{p}}\} \in \mathcal{L}_\infty$ ensures $\dot{\mbs{\nu}}_r \in \mathcal{L}_\infty$. With $\{\mbs{\nu}_r,\dot{\mbs{\nu}}_r,\mbftilde{a},\mbf{x}_c\} \in \mathcal{L}_\infty$, $\mbf{f}-\mbf{f}_d = \mbf{W}\mbftilde{a} - \mbf{C}_c \mbf{x}_c \in \mathcal{L}_\infty$. Through the error dynamics of~\eqref{eq:errordyn}, $\{\mbstilde{\nu}_r,\mbf{f}-\mbf{f}_d\} \in \mathcal{L}_\infty$ results in $\dot{\mbstilde{\nu}}_r \in \mathcal{L}_\infty$. Knowing that $\mbf{s} \in \mathcal{L}_2$, the relationship $\mbstilde{\nu}_r = \mbf{P} \mbf{s}$ leads to $\mbstilde{\nu}_r \in \mathcal{L}_2$. Barbalat's lemma can be used to prove $\mbstilde{\nu}_r \to \mbf{0}$ as $t \to \infty$, since $\mbstilde{\nu}_r \in \mathcal{L}_2$ and $\dot{\mbstilde{\nu}}_r \in \mathcal{L}_\infty$~\cite[p.~657]{Brogliato2020}. It then follows that $\mbstilde{\nu} \to \mbf{0}$ as $t \to \infty$, since $\mbstilde{\nu} = \mbstilde{\nu}_r - \mbf{P} \mbs{\Lambda} \mbftilde{p}$ and both $\mbstilde{\nu}_r \to \mbf{0}$ and $\mbftilde{p} \to \mbf{0}$ as $t \to \infty$.
%
%
\end{proof}

Theorem~\ref{theorem:stability} demonstrates that the proposed control law ensures that $\mbftilde{r} \to \mbf{0}$ and $\mbftilde{p} \to \mbf{0}$ as $t \to \infty$. This results in the position of the CDPR's payload satisfying $\mbf{r} \to \mbf{r}_d$ as $t \to \infty$. The interpretation of $\mbftilde{p} \to \mbf{0}$ as $t \to \infty$ depends on the chosen attitude parameterization: $\mbf{q}^{pa} \to \mbf{q}^{da}$ for unconstrained attitude parameterizations, $\delta \mbs{\epsilon} \to \mbf{0}$ (equivalent to $\mbf{q}^{pa} \to \pm \mbf{q}_d$) for the quaternion, and $\mbf{C}_{pa} \to \mbf{C}_{da}$ for $SO(3)$, all of which describe asymptotic convergence of the attitude of the CDPR's payload to the desired attitude. Note that as in~\cite{Ortega1989,slotine1987adaptive,Damaren1996}, there is no guarantee that $\mbfhat{a} \to \mbf{a}$ as $t \to \infty$, as $\mbfhat{a}$ evolves in a manner that only guarantees asymptotic tracking of the desired payload pose.

\section{CDPR Numerical Example}
\label{sec:Examples}

Consider a $6$~DOF CDPR with $m=8$ cables and a rigid-body payload, as shown in Fig.~\ref{fig:CDPR_Anim}, with 
numerical 
provided in Table~\ref{Table:CDPR_parameters}. The locations of the 8 stationary winches and the attachment points of the cables on the rigid-body payload are given in Table~\ref{Table:cable_loc}. A crossed-cable configuration similar to the IPAnema~2 setup described in~\cite[p.~319]{Pott2018} is used, which results in a relatively large wrench-feasible translational and rotational workspace while avoiding cable collisions.

The numerical simulation is fashioned from the Lagrangian-based dynamic model developed in~\cite{Godbole2019} for flexible cables whose mass and stiffness properties vary with the length of the cable, and is extended to accommodate a 6~DOF, 8-cable CDPR. A first set of simulations is performed with cables modeled as rigid straight lines, where the elastic coordinates of the model from~\cite{Godbole2019} are constrained to be zero (i.e., no elastic deformation can occur). The second set of simulations models elastic deformation of the cables in the axial and two transverse directions with the Rayleigh-Ritz method in~\cite{Godbole2019}. 
%
Both numerical models include cable mass and allow the cables to transmit forces only when under tension. An aramid cable with properties listed in Table \ref{Table:CDPR_parameters} is used. 
In the case with flexible cables, the pose of the payload used by the controller is computed through forward kinematics~\cite{nguyenAttitude} using only the rigid rotation of the winches in order to simulate a realistic implementation scenario and demonstrate robustness to imperfect knowledge of the payload pose. 
All pose tracking errors in the result plots are of the actual payload pose, computed using the deformed cables.

\begin{table}[!t]
\caption{CDPR parameters used in the numerical simulation. \label{Table:CDPR_parameters}}
\centering
\begin{tabular}{|c||c|}
\hline
Parameter & Value   \\\hline\hline
Payload mass~(kg) & $m_p = 6.75$ \\\hline
Payload inertia~(g$\cdot$m$^2$) & $\mbf{I}_p = \text{diag} \{15.8,5.2,14.7\}$ \\ \hline
Cable density~(g/m) & $\rho = 4.6$ \\\hline
Cable elasticity~(N/m$^2$) & $E = 127 \times 10^9$ \\\hline
Cable Radius~(mm) & $r_c = 1$ \\\hline
Winch radius~(m) & $r_i = 0.0254$, $i=1,\ldots,8$ \\\hline
Winch inertia~(g$\cdot$m$^2$) & $J_i = 0.025$, $i=1,\ldots,8$ \\\hline
\end{tabular}
\end{table}

\begin{table}[!t]
\caption{Cable attachment points in the numerical simulation. \label{Table:cable_loc}}
\centering
\begin{tabular}{|c||c|c|}
\hline
Cable& Winch Position Rel. to& Payload Attachment Rel. to\\ & Origin in $\mathcal{F}_a$ (cm)& Payload CoM in $\mathcal{F}_p$ (cm)\\\hline\hline
1 & $ \bbm 71 & 38 & 93 \ebm$ & $ \bbm 3 & 7.5 & -3.75 \ebm$    \\ \hline
2 & $ \bbm -71 & 38 & 93 \ebm $ & $ \bbm -3& 7.5 & -3.75\ebm$   \\ \hline
3 & $ \bbm -71 & -38 & 93 \ebm$ & $ \bbm -3& -7.5 &-3.75\ebm$  \\ \hline
4 & $ \bbm 71 & -38 & 93 \ebm $ & $ \bbm 3& -7.5 & -3.75\ebm$   \\ \hline
5 & $ \bbm -71 & -38 & 0 \ebm$ & $ \bbm -1.5& -7.5& 3.75\ebm$ \\ \hline
6 & $ \bbm 71 & 38 & 0 \ebm$ & $ \bbm 1.5& 7.5& 3.75\ebm$   \\ \hline
7 & $ \bbm -71 & 38 & 0 \ebm$ & $ \bbm -1.5& 7.5& 3.75\ebm$  \\ \hline
8 & $ \bbm 71 & -38 & 0 \ebm$ & $ \bbm 1.5 & -7.5& 3.75\ebm$  \\ \hline 
\end{tabular}
\end{table}


The desired payload position trajectory is $\mbf{r}_d^\trans= 0.1 [ \cos{(0.6\pi t)} \,\, \sin{(0.6\pi t)} \,\, \cos{(0.6\pi t)+4.65}  ]$~m and the desired payload attitude trajectory is $\mbf{q}^{da^\trans} = 20 [ \cos{(0.4\pi t - \pi/2)} \,\, \cos{(0.4\pi t - \pi/4)} \,\, \cos{(0.4\pi t)} ]$~deg, described in terms of a 3-2-1 Euler-angle sequence. Note that while an Euler-angle sequence is used here to define the desired attitude trajectory, any attitude parameteriztion can be used for this purpose and converted to the attitude parameterization chosen for the controller.



Numerical simulations are performed with the proposed adaptive control law using various payload attitude parameterizations, including a 3-2-1 Euler angle sequence, $SO(3)$ (the DCM), the quaternion, the rotation vector, and MRPs. As a comparison, two simplifications of the proposed controller with a 3-2-1 Euler angle sequence are also tested in simulation, where small Euler angles are assumed (i.e., $\mbs{\omega}^{pa} \approx \dot{\mbf{q}}^{pa}$) in either the feedback controller and the adaptive feedforward-based controller (denoted as Simplified Euler) or only the feedback controller (denoted as Simplified FB Euler). These simplifications are similar to the linear control design and analysis performed in~\cite{santos2020redundancy}.

The negative feedback controller is implemented as $\mbf{f}_{fb} = -\mbf{P}^{-\trans} \mbf{y}_c$, where $\mbf{y}_c$ is the output of an SPR controller with feedthrough and input $\mbf{s}$. A variety of methods can be used to design an SPR controller (e.g., see~\cite{SPR}) and in this work a simple first-order low-pass filter $$\mbf{y}_c(s) = \mbf{K}_d \text{diag}\Bigl\{\frac{\omega_c}{s + \omega_c},\cdots,\frac{\omega_c}{s + \omega_c} \Bigr\} \mbf{s}(s),$$ where
$\mbf{K}_d = \mbf{K}_d^\trans > 0$ is the derivative gain, $\omega_c = 2\pi$~rad/s is the chosen cut-off frequency. The inertia entries of 
$\hat{\mbf{a}}$ are all initially set to zero, 
while the payload mass is assumed to be approximately known and therefore $\hat{m}_p$ is initialized within $20$~\% of the true payload mass. 
%
%
The control parameters used for the rigid cable simulation are $\mbs{\Lambda} = 10 \cdot \bone$, $\mbs{\Upsilon} = 5\cdot \bone$, $\mbf{K}_d = \text{diag}\{\mbf{K}_{d,v}, \mbf{K}_{d,\omega}\}$, $\mbf{K}_{d,v}=125\cdot \bone$, and $\mbf{K}_{d,\omega}= 16 \frac{2}{3} \cdot \bone$.
For the quaternion-based controller, $\mbf{K}_{d}$ is doubled to ensure the control gain is the same for small angles across all attitude parameterizations and a fair performance comparison can be made (i.e., $\delta \mbs{\epsilon} \approx \onehalf \mbf{q}^{pa}$, where $\mbf{q}^{pa}$ is an unconstrained attitude parameterization). 
The control gains when simulating the CDPR with flexible cables are reduced to increase robustness to the unmodeled dynamics. Specifically, the terms $\mbf{K}_{d}$ and $\mbs{\Lambda}$ are reduced by a factor of 5 and 2, respectively. This is a common strategy used when controlling the motion of a CDPR with flexible cables~\cite{Caverly2015_TCST}.

\begin{figure}[t!]
\centering
    \includegraphics[width=0.49\textwidth]{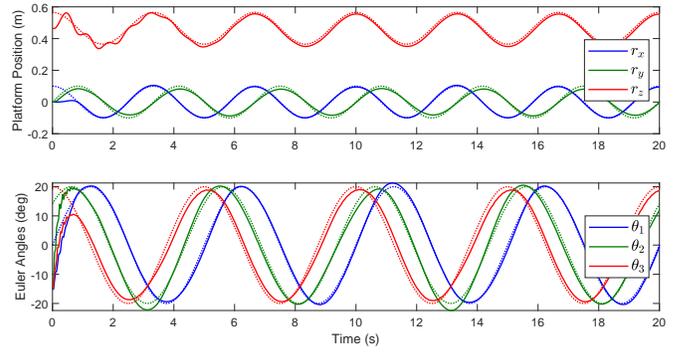}
\centering
\caption{Payload pose trajectory versus time with the $SO(3)$-based controller simulated with flexible cables: actual pose (solid) and desired pose (dashed). The payload attitude is expressed in terms of a 3-2-1 Euler-angle sequence only for visualization purposes.}
	\label{fig:Path}
\end{figure}

\begin{figure}[t!]
\centering
    \includegraphics[width=0.45\textwidth]{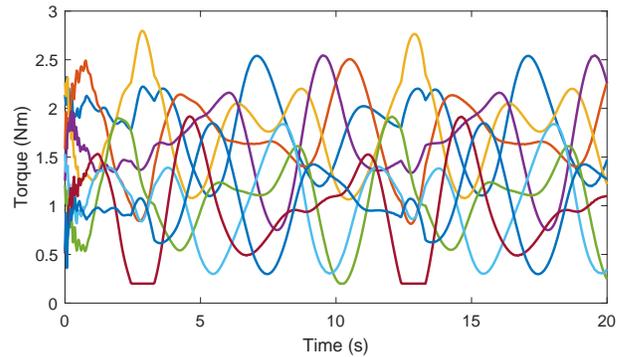}
\centering
\caption{Winch torques versus time for the simulation with the $SO(3)$-based controller and flexible cables.}
	\label{fig:Forces}
\end{figure}

\begin{figure}[t!]
\centering
    \includegraphics[width=0.49\textwidth]{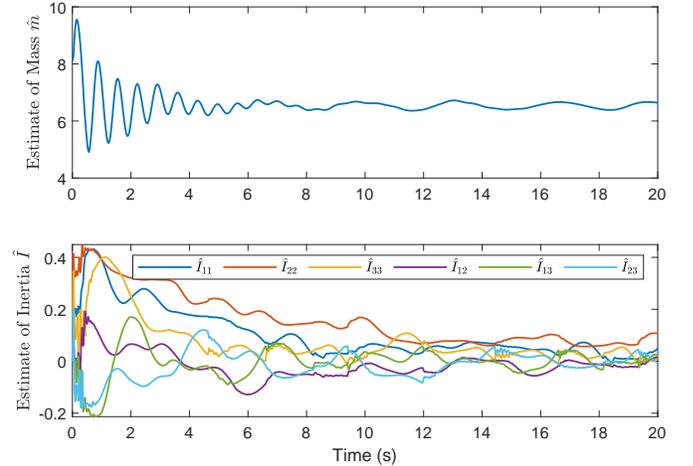} 
\centering
\caption{Estimated parameters $\mbfhat{a}$ versus time for the simulation with the $SO(3)$-based controller and flexible cables. The two subplots are the mass parameter (denoted $\hat{m}$) and inertial parameters (denoted $\hat{I}_{ij}$).}
	\label{fig:FeedForwardPar}
\end{figure}

\begin{figure}[t!]
\vspace{7pt}
     \centering
     \subfigure[]{
         \centering
        \includegraphics[width=0.47\textwidth]{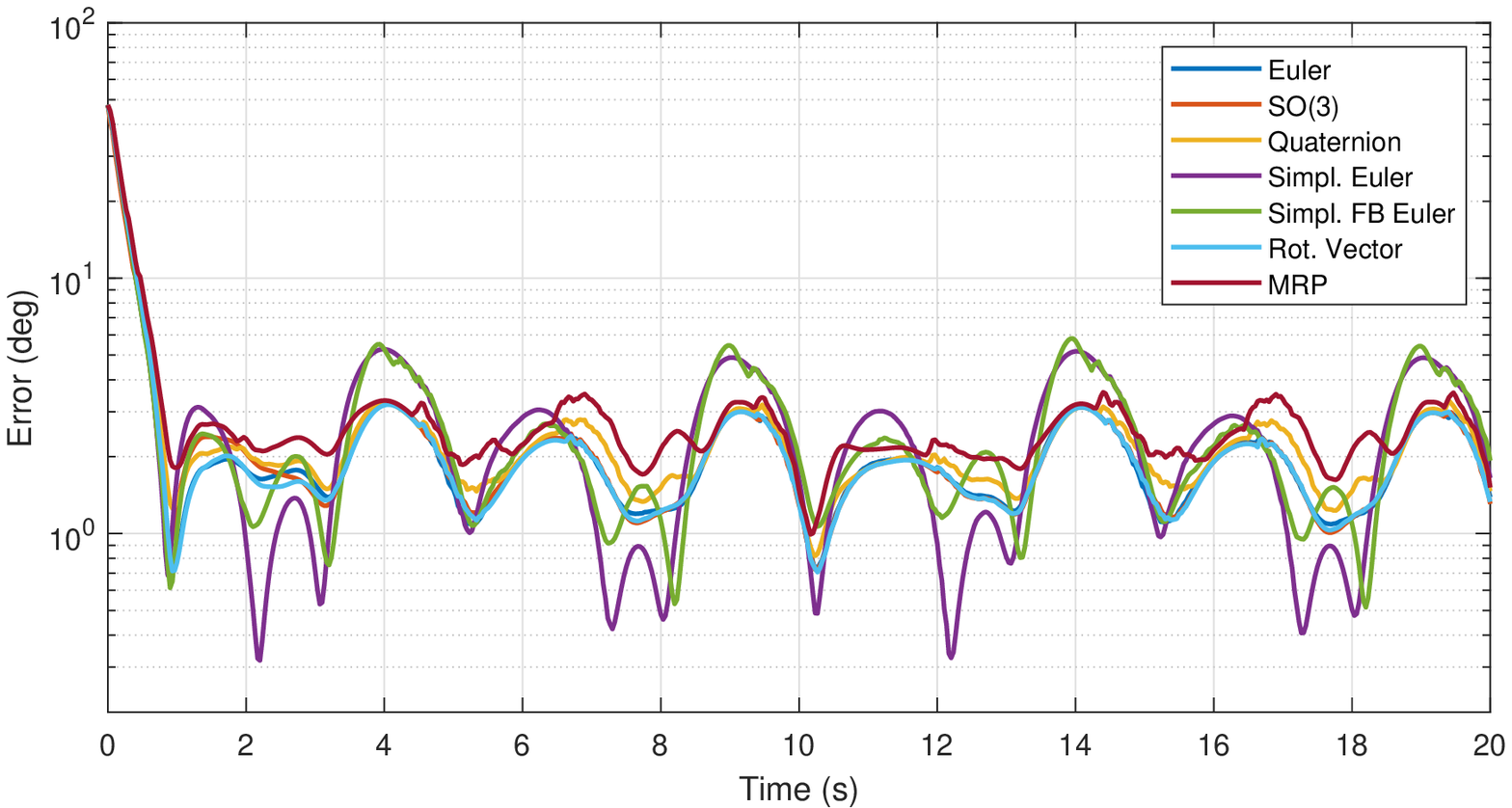}
         \label{fig:AxisAngleError}
     }
     \subfigure[]{
         \centering
        \includegraphics[width=0.45\textwidth]{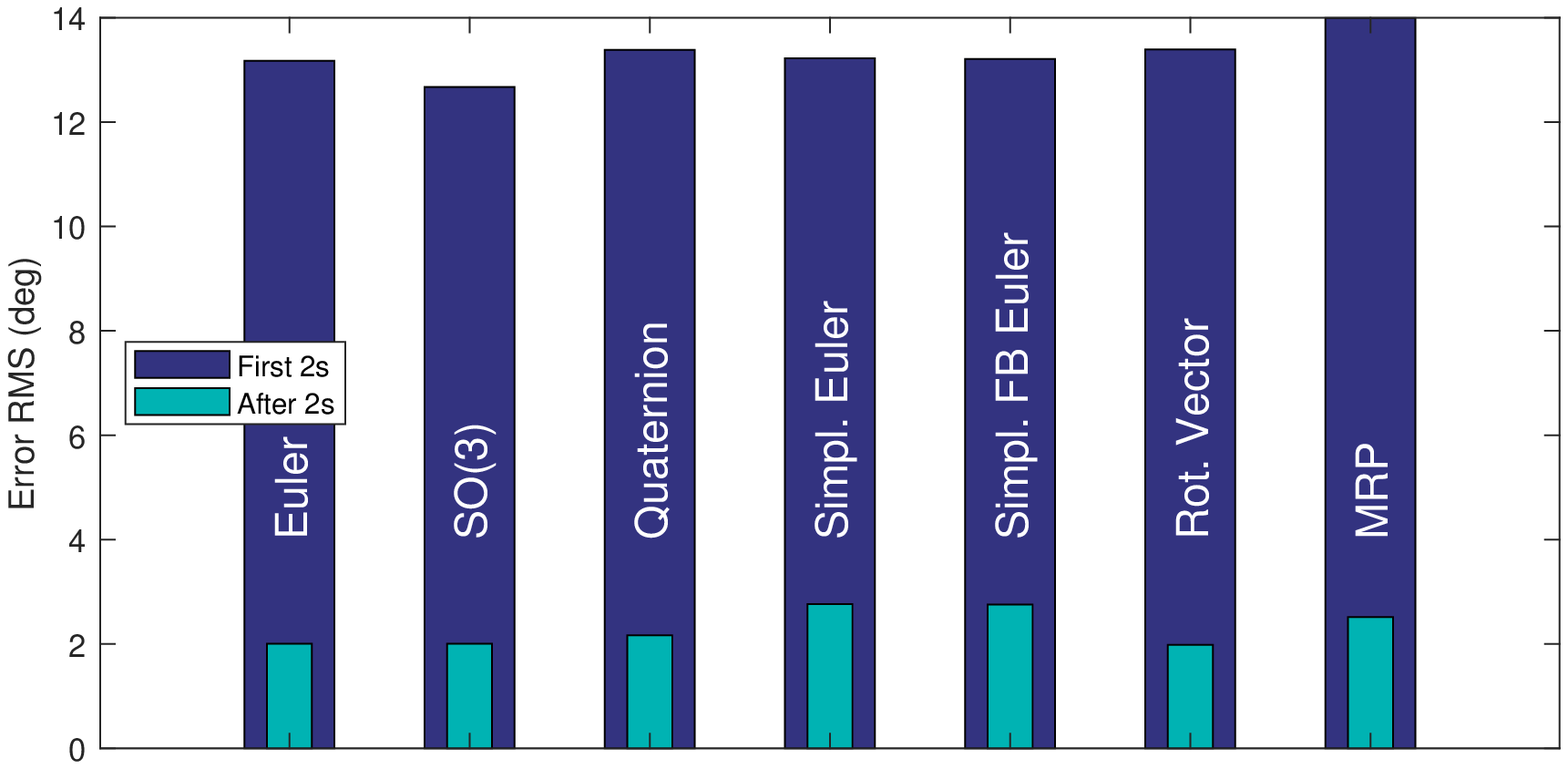}
         \label{fig:rmsError}
     }
        \caption{(a) Attitude tracking error versus time and (b) RMS attitude tracking error with controllers based on different attitude parameterizations for the case of flexible cables. The error is the angle portion of an axis-angle parameterizaion of the attitude tracking error.}
        \label{fig:Errors}
\end{figure}

\begin{figure*}[t!]
     \centering
     \subfigure[]{
     \centering
        \includegraphics[width=0.31\textwidth]{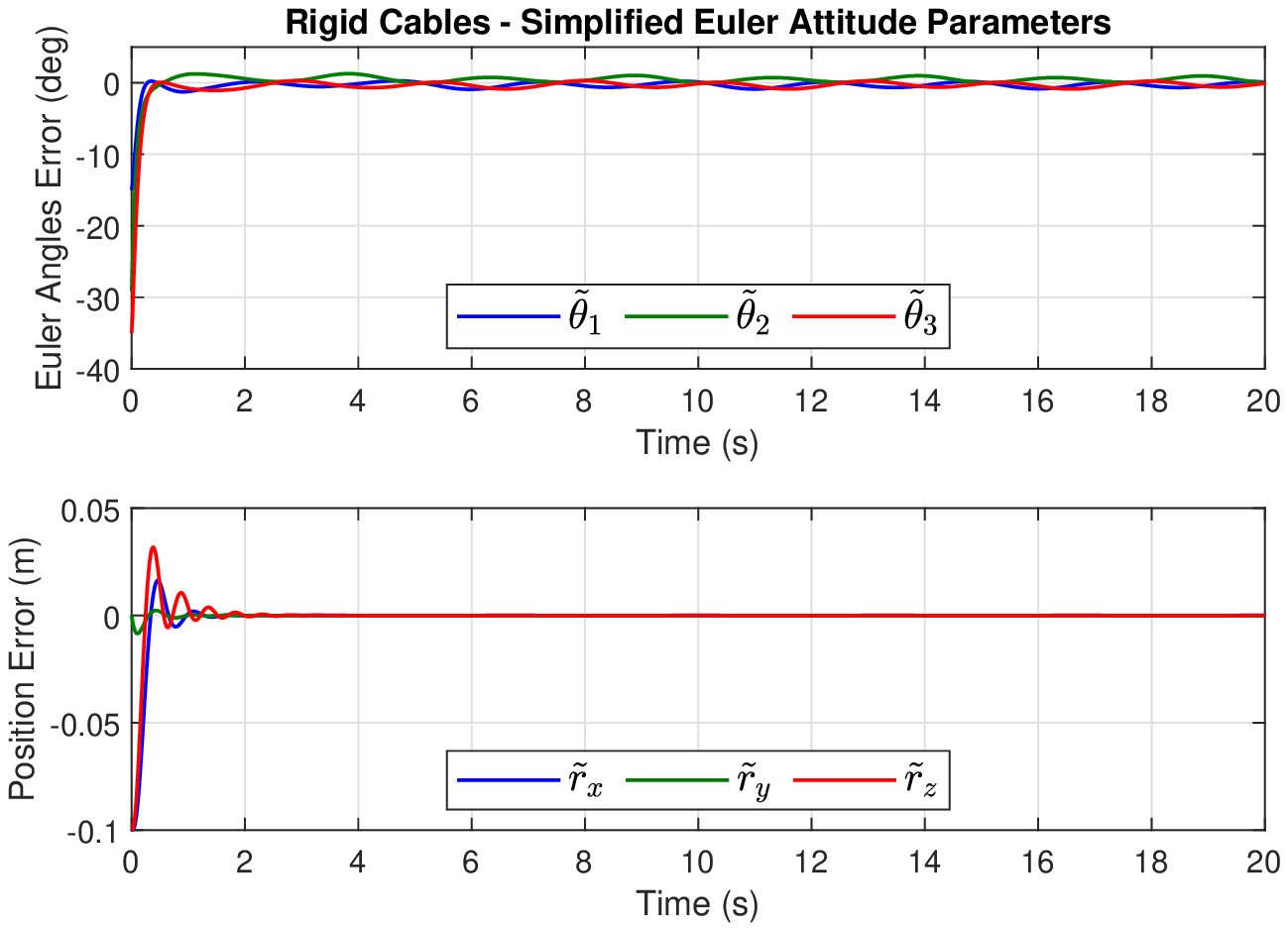}
         \label{fig:Rigid_BadEuler_ang}
     }
     \subfigure[]{
         \centering
        \includegraphics[width=0.31\textwidth]{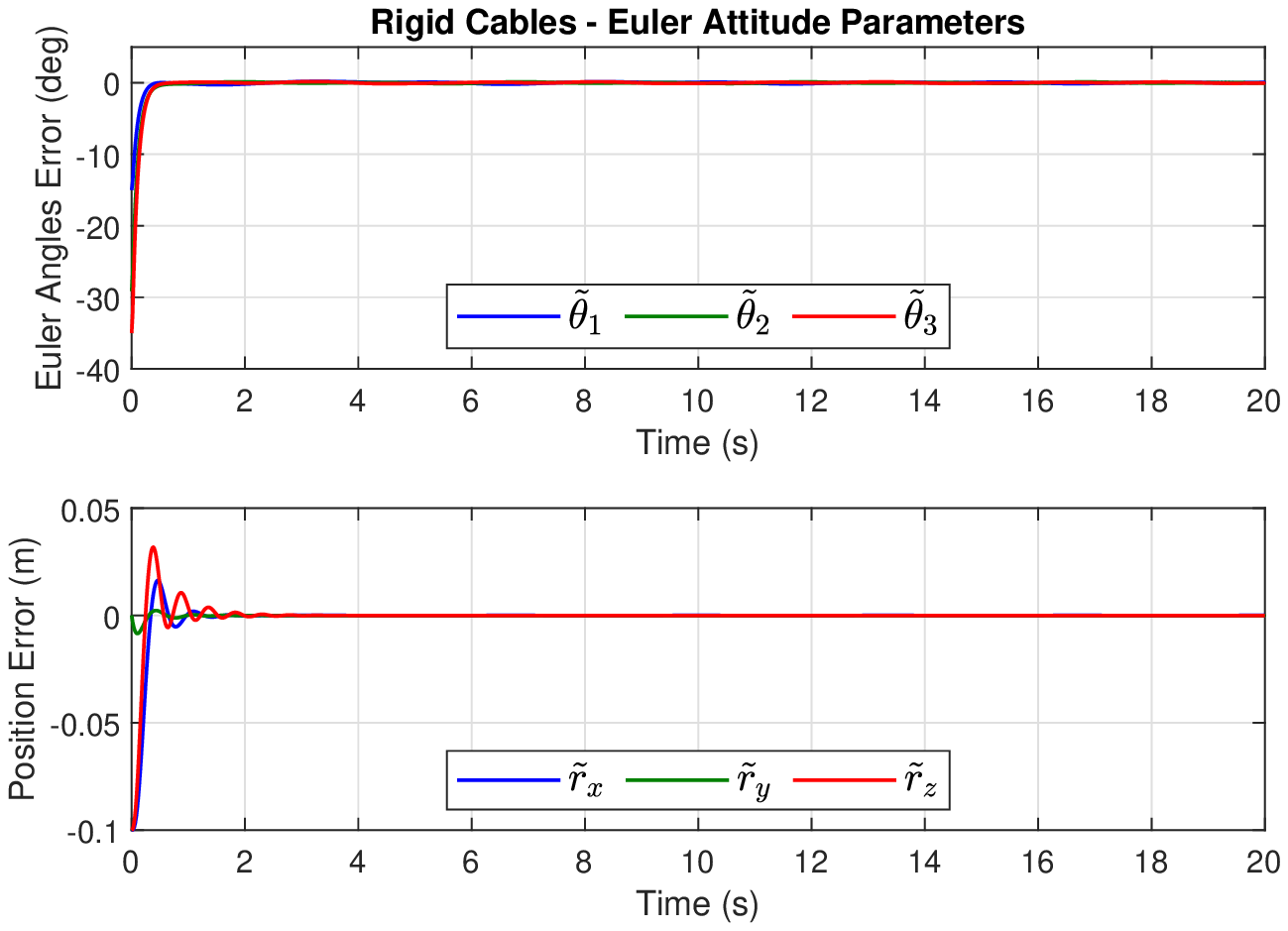}
         \label{fig:Rigid_GoodEuler_angx}
     }
     \subfigure[]{
         \centering
        \includegraphics[width=0.31\textwidth]{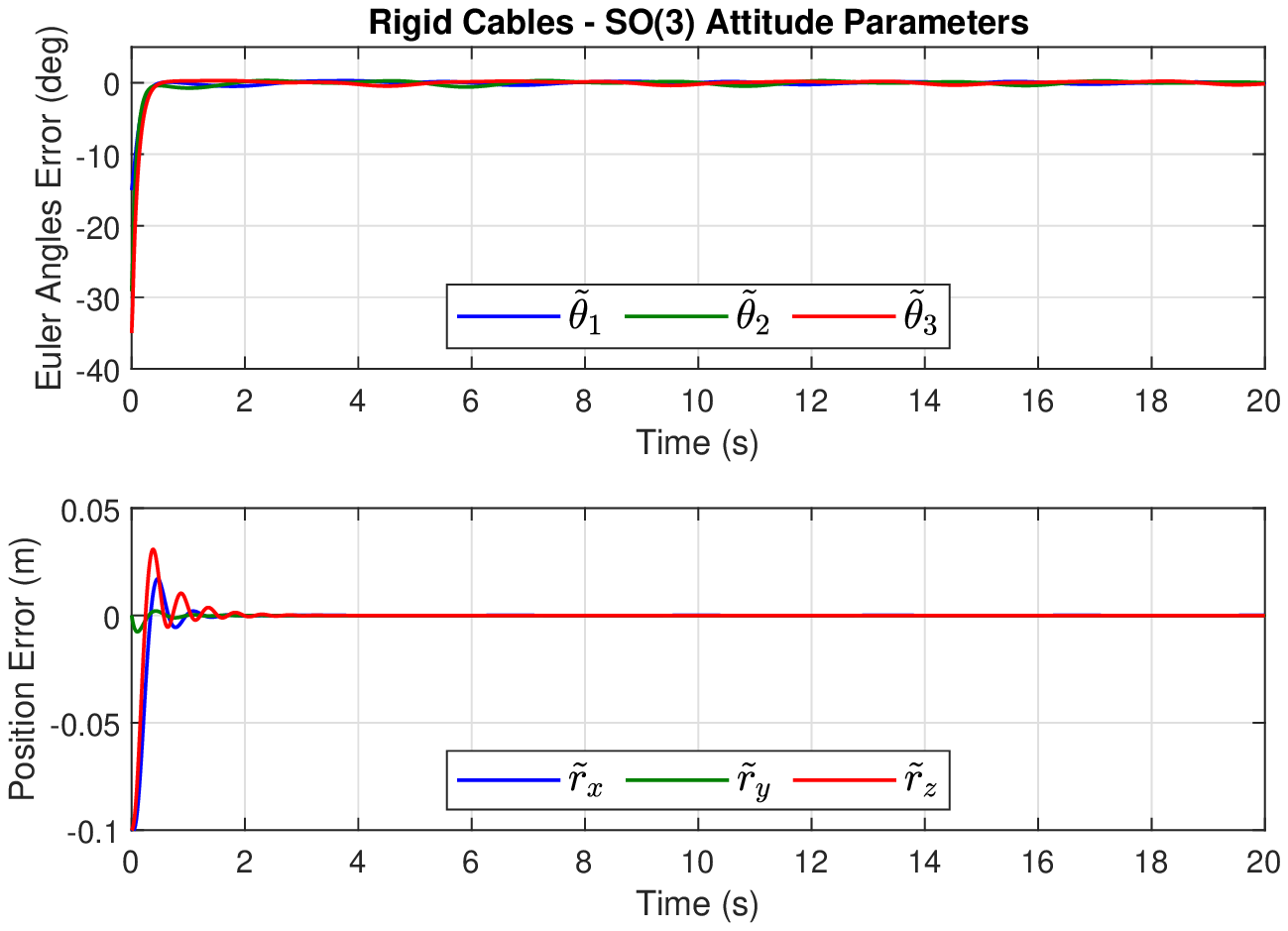}
         \label{fig:Rigid_SO3_ang}
     }
        \caption{Payload pose tracking errors in the rigid-cable simulations with (a) the simplified Euler-angle-based controller, (b) the correctly-implemented Euler-angle-based controller, and (c) the $SO(3)$-based controller. For visualization purposes, the attitude errors are plotted using a 3-2-1 Euler-angle sequence (denoted $\tilde{\theta}_1$, $\tilde{\theta}_2$, and $\tilde{\theta}_3$). The position errors in the three axes of $\mathcal{F}_a$ are denoted as $\tilde{r}_x$, $\tilde{r}_y$, and $\tilde{r}_z$.}
        \label{fig:RigidCableResponse}
\end{figure*}

\begin{figure*}[t!]
     \centering
     \subfigure[]{
         \centering
        \includegraphics[width=0.31\textwidth]{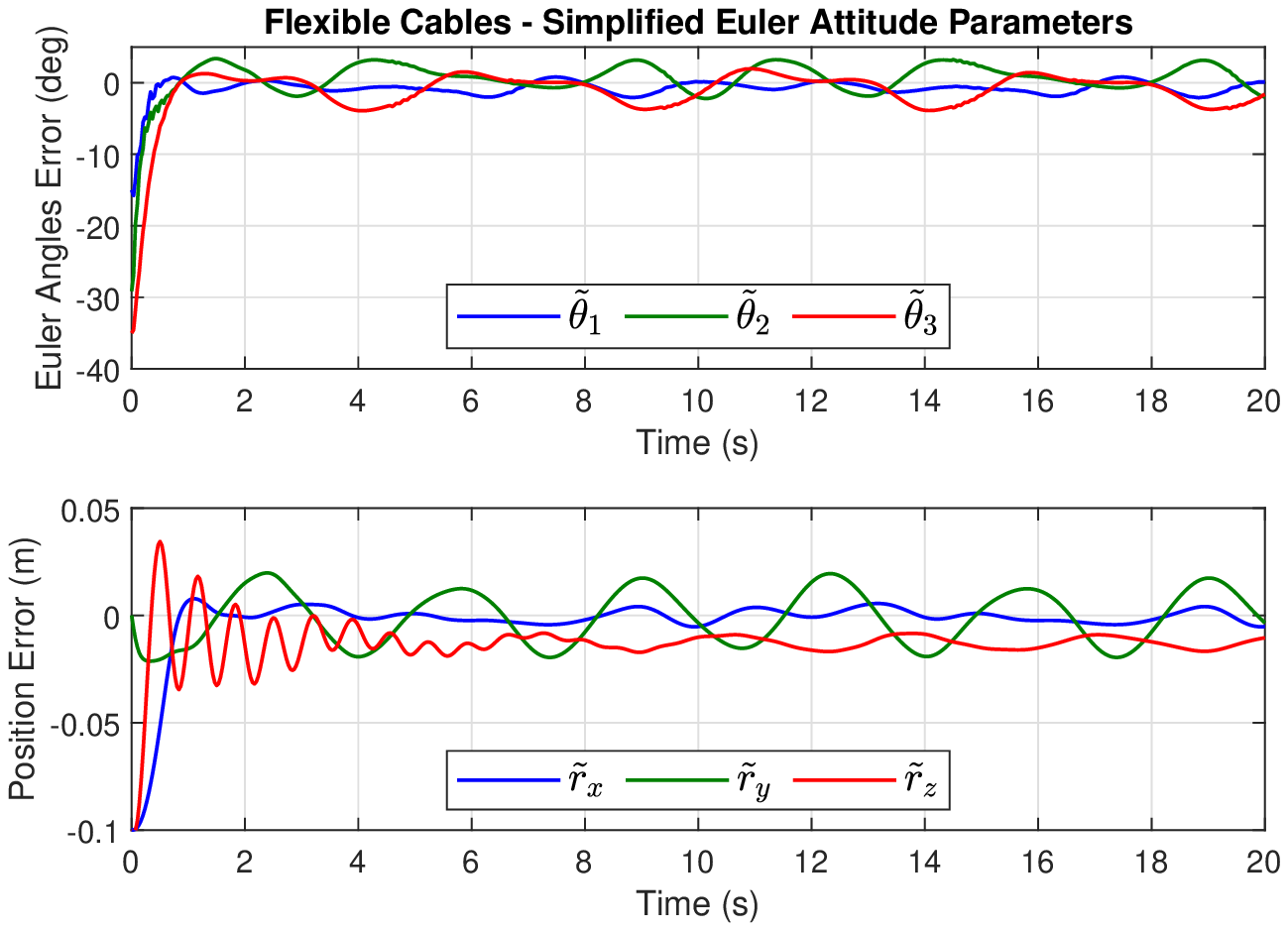}
         \label{fig:Flex_BadEuler_ang}
     }
     \subfigure[]{
         \centering
        \includegraphics[width=0.31\textwidth]{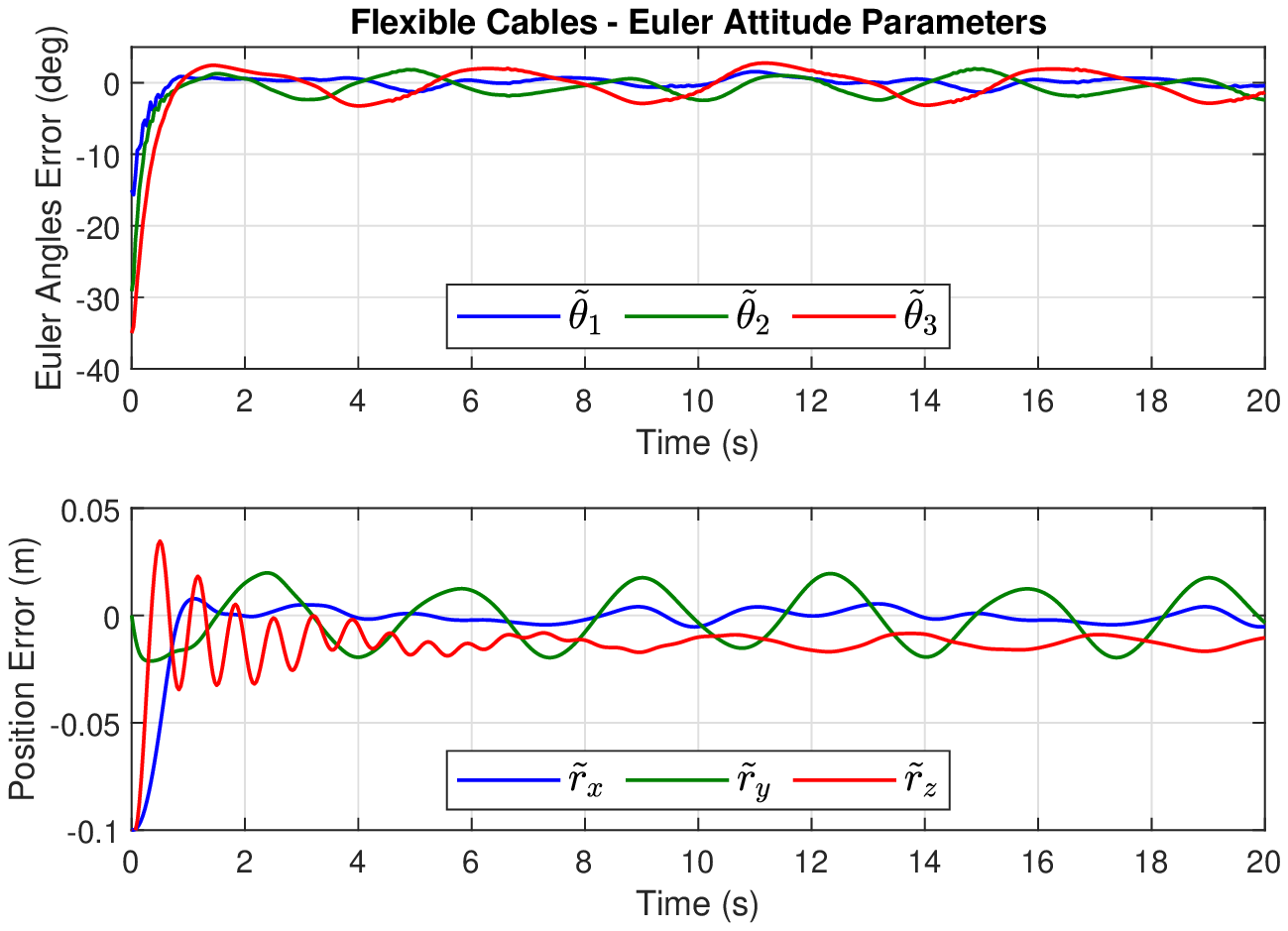}
         \label{fig:Flex_GoodEuler_angx}
     }
     \subfigure[]{
         \centering
        \includegraphics[width=0.31\textwidth]{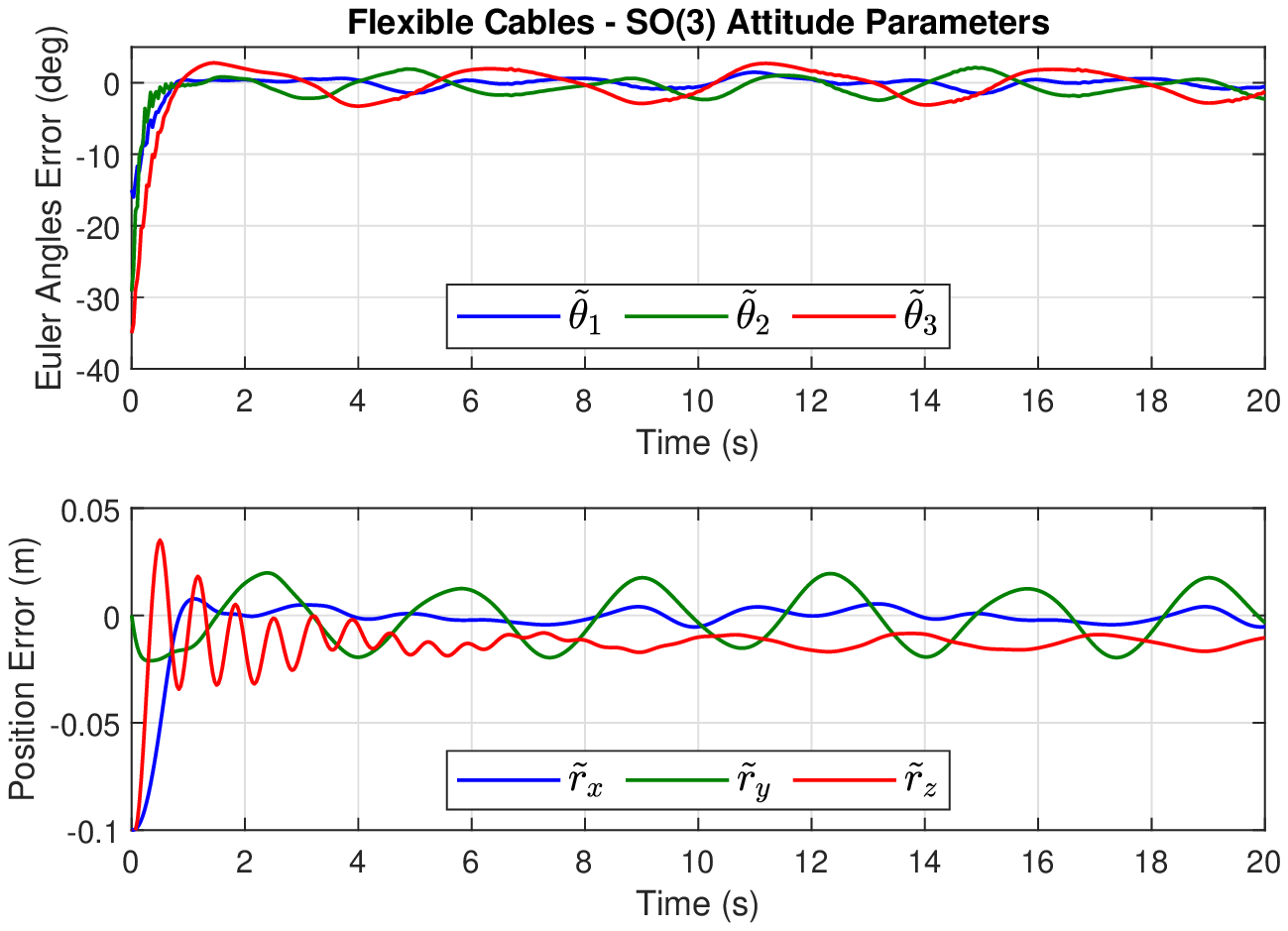}
         \label{fig:Flex_SO3_ang}
     }
        \caption{Payload pose tracking errors in the flexible-cable simulations with (a) the simplified Euler-angle-based controller, (b) the correctly-implemented Euler-angle-based controller, and (c) the $SO(3)$-based controller. For visualization purposes, the attitude errors are plotted using a 3-2-1 Euler-angle sequence ($\tilde{\theta}_1$, $\tilde{\theta}_2$, and $\tilde{\theta}_3$). The position errors in the three axes of $\mathcal{F}_a$ are denoted as $\tilde{r}_x$, $\tilde{r}_y$, and $\tilde{r}_z$.}
        \label{fig:FlexCableResponse}
\end{figure*}

The control wrench, $\mbf{f}$, is distributed to the winch torques, $\mbs{\tau}$, using the improved closed-form solution from~\cite{Pott2018} as
\beq
\label{eq:tau_distribute}
\mbs{\tau} = \mbs{\tau}_{pt} +  \mbf{U}^\trans(\mbs{\theta}) \left(\mbf{f} - \mbs{\Pi}^\trans(\mbs{\rho})\mbs{\tau}_{pt}\right),
\eeq
where $\mbs{\tau}_{pt} = \text{diag}\{r_1,\ldots,r_8\} \mbf{t}_{pt}$ is a pretension torque, $r_i$ is the radius of the $i^\mathrm{th}$ winch, $\mbf{t}_{pt} \in \mathbb{R}^8$ contains the desired pretension in the $8$ cables, and $\mbf{U}(\mbs{\theta})$ 
is a pseudo-inverse of $\mbs{\Pi}(\mbs{\rho})$. A pretension value of $59$~N is used for each cable, with the goal of ensuring that the cable tensions are greater than $7.9$~N and less than $3937$~N. 
If at a particular instance in time, a cable tension exceeds the allowed range, the algorithm sets the cable tension to the limiting value and recomputes~\eqref{eq:tau_distribute} with the row associated with that particular cable removed. 


Simulation results are presented in Figs.~\ref{fig:Path} through~\ref{fig:FlexCableResponse}, including detailed results for the case of flexible cables and the $SO(3)$-based controller in Figs.~\ref{fig:Path},~\ref{fig:Forces}, and~\ref{fig:FeedForwardPar}. Specifically, Fig.~\ref{fig:Path} features the desired payload pose and the closed-loop response of the payload pose, where $\mbf{r}^\trans(0) = \bbm 0 & 0 & 0.465 \ebm \mathrm{m}$ and the initial payload attitude is associated with a 3-2-1 Euler angle sequence with all angles equal to $-15$~deg. The CDPR's winch torques as a function of time are included in Fig.~\ref{fig:Forces} to demonstrate that positive cable tensions are maintained. Fig.~\ref{fig:FeedForwardPar} includes the estimated system parameters $\mbfhat{a}$ as a function of time.

The complete set of simulated controllers is compared by computing the axis-angle parameters associated with the attitude tracking error. The resulting error angle is plotted versus time in Fig.~\ref{fig:AxisAngleError}. To further quantify the differences in attitude tracking errors, the root mean square (RMS) value of the error angle is shown in Fig.~\ref{fig:rmsError} for the seven controllers and is separated into the RMS error of the transient response during the first 2 seconds of the simulation and the steady-state response after the first 2 seconds of the simulation. The comparisons in Fig.~\ref{fig:Errors} demonstrate that the Simplified Euler and Simplified FB Euler controllers lead to the least consistent tracking errors, particularly in their steady-state responses. 
This is also evident in Fig.~\ref{fig:rmsError}, where the RMS attitude tracking errors are largest for these controllers after the first 2 seconds. For a visual comparison, the pose tracking errors versus time are included for the Simplified Euler, Euler, and $SO(3)$-based controllers with rigid cables in Fig.~\ref{fig:RigidCableResponse} and flexible cables in Fig.~\ref{fig:FlexCableResponse}. Quick convergence of the tracking error is seen with rigid cables in Fig.~\ref{fig:RigidCableResponse} and reasonably small tracking error is present with the flexible cables in Fig.~\ref{fig:FlexCableResponse}, which demonstrates the robustness of the proposed controller. The Simplified Euler controller features larger oscillations in tracking errors compared to both the Euler-angle and $SO(3)$-based controllers.

\section{Conclusion}
\label{sec:Conclusion}

This paper presented an adaptive passivity-based CDPR pose tracking controller for various attitude parameterizations. The benefit of performing CDPR pose tracking with carefully defined attitude errors was demonstrated, where a linearized Euler-angle parameterization was shown to yield inferior tracking results. 
Closed-loop asymptotic convergence of the pose tracking error was proven and shown to be robust to parameter uncertainty through nonlinear stability analysis and also in simulation with a CDPR that featured unmodeled and uncertain flexible cable dynamics. 

Future work will focus on experimental implementation of the proposed control law on multiple trajectories and explicit consideration of flexible cables in the controller formulation and stability analysis.








\bibliographystyle{IEEEtran}
\bibliography{IEEEabrv}

\begin{thebibliography}{10}
\providecommand{\url}[1]{#1}
\csname url@samestyle\endcsname
\providecommand{\newblock}{\relax}
\providecommand{\bibinfo}[2]{#2}
\providecommand{\BIBentrySTDinterwordspacing}{\spaceskip=0pt\relax}
\providecommand{\BIBentryALTinterwordstretchfactor}{4}
\providecommand{\BIBentryALTinterwordspacing}{\spaceskip=\fontdimen2\font plus
\BIBentryALTinterwordstretchfactor\fontdimen3\font minus
  \fontdimen4\font\relax}
\providecommand{\BIBforeignlanguage}[2]{{%
\expandafter\ifx\csname l@#1\endcsname\relax
\typeout{** WARNING: IEEEtran.bst: No hyphenation pattern has been}%
\typeout{** loaded for the language `#1'. Using the pattern for}%
\typeout{** the default language instead.}%
\else
\language=\csname l@#1\endcsname
\fi
#2}}
\providecommand{\BIBdecl}{\relax}
\BIBdecl

\bibitem{Pott2018}
A.~Pott, \emph{Cable-Driven Parallel Robots}.\hskip 1em plus 0.5em minus
  0.4em\relax Cham, Switzerland: Springer International, 2018.

\bibitem{Lamaury2013}
J.~Lamaury, M.~Gouttefarde, A.~Chemori, and P.-E. Herv\'{e}, ``Dual-space
  adaptive control of redundantly actuated cable-driven parallel robots,'' in
  \emph{IEEE Int. C. Int. Robot.}, 2013, pp. 4879--4886.

\bibitem{babaghasabha2016adaptive}
R.~Babaghasabha, M.~A. Khosravi, and H.~D. Taghirad, ``Adaptive robust control
  of fully constrained cable robots: singular perturbation approach,''
  \emph{Nonlinear Dynam.}, vol.~85, no.~1, pp. 607--620, 2016.

\bibitem{ji2020adaptive}
H.~Ji, W.~Shang, and S.~Cong, ``Adaptive synchronization control of
  cable-driven parallel robots with uncertain kinematics and dynamics,''
  \emph{IEEE T. Ind. Electron.}, vol.~68, no.~9, pp. 8444--8454, 2020.

\bibitem{shang2020adaptive}
W.~Shang, F.~Xie, B.~Zhang, S.~Cong, and Z.~Li, ``Adaptive cross-coupled
  control of cable-driven parallel robots with model uncertainties,''
  \emph{IEEE Robot. Autom. Lett.}, vol.~5, no.~3, pp. 4110--4117, 2020.

\bibitem{harandi2021adaptive}
M.~R.~J. Harandi, S.~Khalilpour, H.~D. Taghirad, and J.~G. Romero, ``Adaptive
  control of parallel robots with uncertain kinematics and dynamics,''
  \emph{Mechanical Syst. Signal Pr.}, vol. 157, p. 107693, 2021.

\bibitem{Ortega1989}
R.~Ortega and M.~W. Spong, ``Adaptive motion control of rigid robots: A
  tutorial,'' \emph{Automatica}, vol.~25, no.~6, pp. 877--888, 1989.

\bibitem{Brogliato2020}
B.~Brogliato, R.~Lozano, B.~Maschke, and O.~Egeland, \emph{Dissipative Systems
  Analysis and Control: Theory and Applications}, 3rd~ed.\hskip 1em plus 0.5em
  minus 0.4em\relax Cham, Switzerland: Springer International, 2020.

\bibitem{shibata2008null}
T.~Shibata and T.~Murakami, ``Null space motion control by {PID} control
  considering passivity in redundant manipulator,'' \emph{IEEE T. Ind.
  Inform.}, vol.~4, no.~4, pp. 261--270, 2008.

\bibitem{abdellatif2008passivity}
H.~Abdellatif, B.~Heimann, and J.~Kotlarski, ``Passivity-based
  observer/controller design with desired dynamics compensation for 6 {DOFs}
  parallel manipulators,'' in \emph{IEEE Int. C. Int. Robot.}, 2008, pp.
  2392--2397.

\bibitem{Hayes2020}
A.~Hayes and R.~J. Caverly, ``Passivity-based control allocation of a
  redundantly-actuated parallel robotic manipulator with a point-mass
  payload,'' in \emph{Amer. Contr. Conf.}, 2020, pp. 2432--2437.

\bibitem{Zarebidoki2011}
M.~Zarebidoki, A.~Lotfavar, and H.~R. Fahham, ``Effectiveness of adaptive
  passivity-based trajectory tracking control of a cable-suspended robot,'' in
  \emph{Int. Conf. Trends Mech. Indust. Eng.}, 2011, pp. 180--184.

\bibitem{Caverly2015_TCST}
R.~J. Caverly, J.~R. Forbes, and D.~Mohammadshahi, ``Dynamic modeling and
  passivity-based control of a single degree of freedom cable-actuated
  system,'' \emph{IEEE T. Contr. Syst. T.}, vol.~23, no.~3, pp. 898--909, 2015.

\bibitem{Caverly2018}
R.~J. Caverly and J.~R. Forbes, ``Flexible cable-driven parallel manipulator
  control: Maintaining positive cable tensions,'' \emph{IEEE T. Contr. Syst.
  T.}, vol.~26, no.~5, pp. 1874--1883, 2018.

\bibitem{Godbole2019}
H.~A. Godbole, R.~J. Caverly, and J.~R. Forbes, ``Dynamic modeling and adaptive
  control of a single degree-of-freedom flexible cable-driven parallel robot,''
  \emph{J.~Dyn.~Sys.,~Meas.,~Control}, vol. 141, no.~10, p. 101002, 2019.

\bibitem{khalilpour2021tip}
S.~Khalilpour, R.~Khorrambakht, H.~Damirchi, H.~Taghirad, and P.~Cardou,
  ``Tip-trajectory tracking control of a deployable cable-driven robot via
  output redefinition,'' \emph{Multibody Syst. Dyn.}, vol.~52, no.~1, pp.
  31--58, 2021.

\bibitem{Cheah2021}
S.~K. Cheah and R.~J. Caverly, ``Passivity-based pose regulation and
  {Jacobian}-based force distribution of a cable-driven parallel robot,'' in
  \emph{Amer. Contr. Conf.}, 2021, pp. 124--129.

\bibitem{KorayemFeedbackLinearizationCDPR}
M.~H. Korayem, M.~Yousefzadeh, and B.~Beyranvand,
  ``\BIBforeignlanguage{eng}{Dynamics and control of a 6-dof cable-driven
  parallel robot with visco-elastic cables in presence of measurement noise},''
  \emph{\BIBforeignlanguage{eng}{J. Intell. Robot. Syst.}}, vol.~88, no.~1, pp.
  73--95, 2017.

\bibitem{Bruckmann2008}
T.~Bruckmann, L.~Mikelsons, T.~Brandt, M.~Hiller, and D.~Schramm, ``Wire robots
  part {II}: Dynamics, control \& application,'' in \emph{Parallel
  Manipulators, New Developments}.\hskip 1em plus 0.5em minus 0.4em\relax
  Vienna, Austria: {I-Tech} Education and Publishing, 2008, ch.~7, pp.
  133--152.

\bibitem{chellal2017model}
R.~Chellal, L.~Cuvillon, and E.~Laroche, ``Model identification and
  vision-based $\mathcal{H}_\infty$ position control of 6-{DoF} cable-driven
  parallel robots,'' \emph{Int. J. Control}, vol.~90, no.~4, pp. 684--701,
  2017.

\bibitem{Schenk2018}
C.~Schenk, C.~Masone, A.~Pott, and H.~H. B\"{u}lthoff, ``Application of a
  differentiator-based adaptive super-twisting controller for a redundant
  cable-driven parallel robot,'' in \emph{Int. Conf. Cable-Driven Parallel
  Robots}.\hskip 1em plus 0.5em minus 0.4em\relax Cham, Switzerland: Springer
  International, 2018, vol.~53, pp. 254--267.

\bibitem{Begey2019}
J.~Begey, L.~Cuvillon, M.~Lesellier, M.~Gouttefarde, and J.~Gangloff, ``Dynamic
  control of parallel robots driven by flexible cables and actuated by
  position-controlled winches,'' \emph{IEEE T. Robot.}, vol.~35, no.~1, pp.
  286--293, 2019.

\bibitem{santos2020redundancy}
J.~C. Santos, A.~Chemori, and M.~Gouttefarde, ``Redundancy resolution
  integrated model predictive control of {CDPRs}: Concept, implementation and
  experiments,'' in \emph{IEEE Int. Conf. Robot. Autom.}, 2020, pp. 3889--3895.

\bibitem{Zake2019}
Z.~Zake, F.~Chaumette, N.~Pedemonte, and S.~Caro, ``Vision-based control and
  stability analysis of a cable-driven parallel robot,'' \emph{IEEE Rob. Autom.
  Lett.}, vol.~4, no.~2, pp. 1029--1036, 2019.

\bibitem{LeNguyenVinh2021CPRP}
V.~L. Nguyen and R.~J. Caverly, ``\BIBforeignlanguage{eng}{Cable-driven
  parallel robot pose estimation using extended {Kalman} filtering with
  inertial payload measurements},'' \emph{\BIBforeignlanguage{eng}{IEEE Robot.
  Autom. Lett.}}, vol.~6, no.~2, pp. 3615--3622, 2021.

\bibitem{zake2021moving}
Z.~Zake, F.~Chaumette, N.~Pedemonte, and S.~Caro, ``Moving-platform pose
  estimation for cable-driven parallel robots,'' in \emph{IEEE Int. C. Int.
  Robot.}, 2021, pp. 8036--8043.

\bibitem{Damaren1996}
C.~J. Damaren, ``Adaptive control of flexible manipulators carrying large
  uncertain payloads,'' \emph{J.~ Rob.~Syst.}, vol.~13, no.~4, pp. 219--228,
  1996.

\bibitem{egeland1994passivity}
O.~Egeland and J.-M. Godhavn, ``Passivity-based adaptive attitude control of a
  rigid spacecraft,'' \emph{IEEE T. Automat. Contr.}, vol.~39, no.~4, pp.
  842--846, 1994.

\bibitem{slotine1987adaptive}
J.-J.~E. Slotine and W.~Li, ``On the adaptive control of robot manipulators,''
  \emph{Int. J. Robot. Res.}, vol.~6, no.~3, pp. 49--59, 1987.

\bibitem{Forbes2013}
J.~R. Forbes, ``Passivity-based attitude control on the special orthogonal
  group of rigid-body rotations,'' \emph{J.~Guid.~Control~Dynam.}, vol.~36,
  no.~6, pp. 1596--1605, 2013.

\bibitem{WuTse-Huai2016Avof}
T.-H. Wu and T.~Lee, ``\BIBforeignlanguage{eng}{Angular velocity observer for
  attitude tracking on {SO(3)} with the separation property},''
  \emph{\BIBforeignlanguage{eng}{Int. J. Control Autom.}}, vol.~14, no.~5, pp.
  1289--1298, 2016.

\bibitem{MarkleyF.Landis2014Fosa}
F.~L. Markley and J.~L. Crassidis, \emph{\BIBforeignlanguage{eng}{Fundamentals
  of Spacecraft Attitude Determination and Control}}.\hskip 1em plus 0.5em
  minus 0.4em\relax New York, NY: Springer, 2014.

\bibitem{nguyenAttitude}
V.~L. Nguyen and R.~J. Caverly, ``{CDPR} forward kinematics with error
  covariance bounds for unconstrained end-effector attitude
  parameterizations,'' in \emph{Int. Conf. Cable-Driven Parallel Robots}.\hskip
  1em plus 0.5em minus 0.4em\relax Cham, Switzerland: Springer International,
  2021, pp. 37--49.

\bibitem{Khosravi2}
M.~A. Khosravi and H.~D. Taghirad, ``Dynamic modeling and control of parallel
  robots with elastic cables: Singular perturbation approach,'' \emph{IEEE T.
  Robot.}, vol.~30, no.~3, pp. 694--704, 2014.

\bibitem{MarquezBook}
H.~Marquez, \emph{Nonlinear Control Systems}.\hskip 1em plus 0.5em minus
  0.4em\relax Hoboken, NJ: Wiley, 2003.

\bibitem{SPR}
R.~J. Benhabib, R.~P. Iwens, and R.~L. Jackson, ``Stability of large space
  structure control systems using positivity concepts,'' \emph{J. Guid.
  Control}, vol.~4, no.~5, pp. 487--494, Sept.--Oct. 1981.

\end{thebibliography}

\end{document}